%% file: iclr2026_conference.tex
\documentclass{article} 
\usepackage{iclr2026_conference,times}

\input{math_commands.tex}

\usepackage{hyperref}
\usepackage{url}
\usepackage{wrapfig} 

\usepackage{graphicx}
\usepackage{algorithm}
\usepackage{algorithmic}
\usepackage{amsmath}
\usepackage{amssymb}
\usepackage{booktabs}
\usepackage{multirow}
\usepackage{amsthm}
\usepackage{newfloat}
\usepackage{listings}
\usepackage{enumitem}
\usepackage{caption}    
\usepackage{colortbl}   
\usepackage{xcolor}
\definecolor{GoodGreen}{RGB}{0,128,0}
\definecolor{BadRed}{RGB}{200,0,0}

\newtheorem{theorem}{Theorem}
\newtheorem{proposition}{Proposition}
\newtheorem{lemma}{Lemma}
\newtheorem{definition}{Definition}

\makeatletter
\@ifundefinedcolor{ForestGreen}{\definecolor{ForestGreen}{RGB}{34,139,34}}{}
\makeatother

\providecommand{\goodup}[1]{\textcolor{GoodGreen}{\tiny$\uparrow$\,#1}}
\providecommand{\gooddown}[1]{\textcolor{GoodGreen}{\tiny$\downarrow$\,#1}}

\providecommand{\baddown}[1]{\textcolor{BadRed}{\tiny$\downarrow$\,#1}}

\title{P\textsuperscript{2}-DPO: Grounding Hallucination in Perceptual Processing via\\ Calibration Direct Preference Optimization}

\author{
Ruipeng Zhang$^{1,3}$,
Zhihao Li$^{1,2,3}$,
Haozhang Yuan$^{1,2,3}$,
C.~L.~Philip Chen$^{1,2,3}$,
Tong Zhang$^{1,2,3}$\thanks{Corresponding author.}\\
$^{1}$ Guangdong Provincial Key Laboratory of Computational AI Models and Cognitive Intelligence, \\
\hspace*{1.6em}School of Computer Science \& Engineering, South China University of Technology \\
$^{2}$ Pazhou Lab, Guangzhou, China \\
$^{3}$ Engineering Research Center of the Ministry of Education on Health Intelligent Perception and \\
\hspace*{1.6em}Paralleled Digital-Human, Guangzhou, China \\
\texttt{\{ruipengzhang.zrp, lizhihaohenry\}@gmail.com}
\\ \texttt{\{philipchen, tony\}@scut.edu.cn}
}

\iclrfinalcopy 
\begin{document}

\maketitle

\begin{abstract}
Hallucination has recently garnered significant research attention in Large Vision-Language Models (LVLMs). Direct Preference Optimization (DPO) aims to learn directly from the corrected preferences provided by humans, thereby addressing the hallucination issue. Despite its success, this paradigm has yet to specifically target the perceptual bottleneck in attended regions or address insufficient Visual Robustness against image degradation. Furthermore, existing preference pairs are often vision-agnostic and their inherently off-policy nature limits their effectiveness in guiding model learning. To address these challenges, we propose Perceptual Processing Direct Preference Optimization (P\textsuperscript{2}-DPO), a novel training paradigm in which the model generates and learns from its own preference pairs, thereby directly addressing the identified visual bottlenecks while inherently avoiding the issues of vision-agnostic and off-policy data. It introduces: (1) an on-policy preference pairs construction method targeting Focus-and-Enhance perception and Visual Robustness, and (2) a well-designed Calibration Loss to precisely align visual signals with the causal generation of text. Experimental results demonstrate that with a comparable amount of training data and cost, P\textsuperscript{2}-DPO outperforms strong baselines that rely on costly human feedback on benchmarks. Furthermore, evaluations on Attention Region Fidelity (ARF) and image degradation scenarios validate the effectiveness of P\textsuperscript{2}-DPO in addressing perceptual bottleneck in attended regions and improving Visual Robustness against degraded inputs.\footnote{The code for model training and testing is available at: \url{https://github.com/ZrpChuang/P2-DPO}}
\end{abstract}

\section{Introduction}
The rapid evolution of Large Language Models (LLMs)~\citep{zhao2023survey,achiam2023gpt,touvron2023llama,guo2025deepseek} has catalyzed the development of Large Vision-Language Models (LVLMs)~\citep{li2023blip,zhu2023minigpt,liu2024improved}, which have demonstrated impressive capabilities in cross-modal understanding.
Despite these advances, hallucination remains a critical challenge: the generated text lacks faithful grounding in visual input~\citep{liu2024survey,guan2024hallusionbench}. Existing research~\citep{xie2024v,leng2024mitigating} attributes this to a fundamental architectural imbalance where powerful linguistic priors often suppress visual signals, resulting in hallucination.
We can broadly categorize the visual sources of hallucination into two distinct classes: failures in \textit{Perception} and \textit{Perceptual Processing}. The former represents a failure that pushes the model to its upper limits, typically stemming from insufficient knowledge gained during pre-training or the inherent limitations of the visual encoder. The latter, in contrast, is considered a failure where the model possesses the latent potential to answer correctly without external knowledge, as it has already captured the key visual evidence but still fails in the subsequent processing stage, leading to hallucination~\citep{yang2023dawn}. While Perception has received significant attention, we argue that Perceptual Processing remains an equally critical but underexplored bottleneck for mitigating hallucinations. Specifically, we identify two core manifestations of this Perceptual Processing failure.
The first is a \textit{Perceptual Bottleneck} in Attended Regions, where the model accurately localizes its attention but fails to get the right answer~(Figure~\ref{fig:main_failure_modes}, Left). The second is a \textit{Lack of Robustness}, which reveals the model's high sensitivity to image quality. As shown in Figure~\ref{fig:main_failure_modes}~(Right), even minor noise or image degradation can cause the model to generate a hallucinatory output. A crucial insight from these failure modes is that the model is often on the cusp of a correct answer, failing not in a fundamental failure in \textit{Perception}, but in its final processing step, which makes it an ideal candidate for self-correction.
\setlength{\abovecaptionskip}{1.5pt}
\begin{wrapfigure}{r}{0.6\columnwidth} 
    \vspace{-0.3\baselineskip} 
    \centering
    \includegraphics[width=\linewidth]{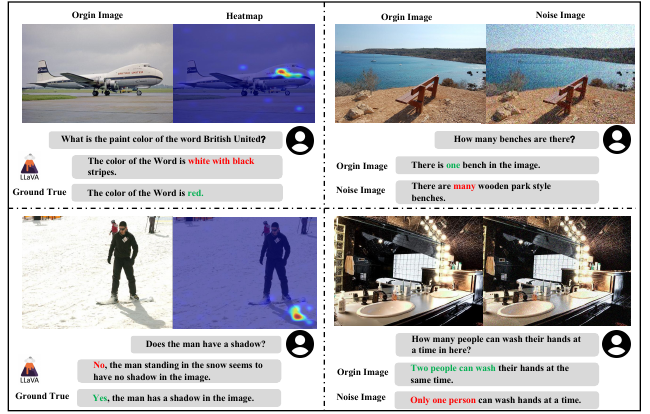}
    \captionsetup{font=scriptsize, skip=2.5pt} 
    \caption{
        Left: Perceptual Bottleneck. Attention heatmaps visualize that the model correctly focuses on the key entity, yet still hallucinates. Right: Lack of Robustness. The model hallucinates when presented with an image containing noise imperceptible to humans.
        \emph{All shown answers are generated by LLaVA-1.5-7B.}
    }
    \label{fig:main_failure_modes}
    \vspace{-1.5\baselineskip} 
\end{wrapfigure}

The predominant approach to mitigating hallucinations in LVLMs is preference optimization~\citep{rafailov2023direct,yu2024rlhf}, which reduces model hallucination using human or synthetic preferences. Existing data sources follow two main approaches: (1) post-hoc textual revisions of model outputs guided by human or AI feedback~\citep{schulman2017proximal,yu2024rlhf,lee2023rlaif}; and (2) synthetic injection of hallucinations to generate contrastive pairs~\citep{zhou2024aligning}. Despite their demonstrated efficacy~\citep{ouyang2022training,rafailov2023direct}, we contend that these methods, which we term \textit{Post-hoc Semantic Correction (PSC)}, suffer from a critical limitation: by constructing preference pairs based on textual discrepancies alone, they are inherently \textbf{vision-agnostic}. Consequently, the resulting gradients are poorly targeted, failing to teach the model how to resolve the Perceptual Bottleneck or how to address the Lack of Robustness. Moreover, their reliance on external feedback makes the data inherently \textit{off-policy}, which is inefficient for DPO's KL-constrained objective~\citep{azar2024general}.

To address the gap, we propose a novel self-correcting method: Perceptual Processing Direct Preference Optimization (P\textsuperscript{2}-DPO). Our core idea is to have the model generate its own on-policy, vision-aware preference data, which provides a direct and targeted gradient signal for its vision-dominant parameters.
Specifically, we introduce two new types of preference pairs, each tailored to one of the core deficits:
(1) a \textit{Focus-and-Enhance Preference Pair} to resolve the \textit{Perceptual Bottleneck} by contrasting outputs from enhanced versus degraded fine-grained details; and
(2) a \textit{Visual Robustness Preference Pair} to boost \textit{Visual Robustness} by contrasting outputs from processing clean versus noisy signals. To further enhance training efficacy, our framework incorporates an \textit{adaptive optimization process}, featuring a Dynamic Deficit-Weighting mechanism and an auxiliary Calibration loss. This results in an efficient learning loop that improves the visual grounding of LVLMs with high precision, obviating the need for costly human feedback.

In summary, our main contributions are as follows.
\begin{itemize}[nosep,leftmargin=*]
    \item We categorize visual causes of hallucination in LVLMs into Perception and Perceptual Processing failures, and identify two overlooked manifestations of the latter. We also critique the existing preference pairs construction methods as vision-agnostic and off-policy.
    \item We propose Perceptual Processing Direct Preference Optimization (P\textsuperscript{2}-DPO), a self-correction framework that synergistically combines the generation of on-policy, vision-aware preference pairs with a purpose-built optimization strategy, including a dynamic weighting mechanism and a novel Calibration loss, to directly and efficiently rectify perceptual deficits.
    \item We experimentally show that P\textsuperscript{2}-DPO achieves competitive results on challenging hallucination benchmarks, outperforming costly human-feedback methods without using any such annotations.
\end{itemize}

\section{Related Work}
\paragraph{General Capability Enhancement}
A significant body of research aims to mitigate LVLMs hallucination~\citep{yue2024less,han2024instinctive,yu2024hallucidoctor,huang2025survey}. One prominent line of work focuses on improving the model's ability to perceive visual evidence by integrating more powerful vision encoders~\citep{lin2024moe,wu2024v,shen2024mome}, leveraging higher-resolution imagery~\citep{li2024monkey,hong2024cogagent}, or designing refined attention mechanisms~\citep{zhong2024moextend,wu2024controlmllm}. While these methods can reduce hallucination, they often incur substantial computational costs and engineering overhead due to their architecture-level modifications. More importantly, such architecture-centric solutions are less portable across LVLMs backbones and deployment constraints, and they constitute general-purpose enhancements rather than targeted interventions for the specific visual processing failures we address.

\paragraph{DPO-based Alignment and Its Data-Centric Limitations}
Direct Preference Optimization (DPO)~\citep{rafailov2023direct,an2023direct,zhang2024direct} has become the primary paradigm for aligning LVLMs, with multimodal extensions such as mDPO~\citep{wang2024mdpo} and fine-grained data strategies like RLHF-V~\citep{yu2024rlhf} further advancing the field. However, the efficacy of DPO is fundamentally contingent on the origin and construction of its preference pairs. Current data generation largely falls into two categories: (1) \textit{post-hoc textual revision}, where a model's hallucinatory output~($y^{l}$) is corrected by humans or a stronger AI to form the winning response~($y^{w}$)~\citep{wang2023aligning,liu2023aligning,sun-etal-2024-aligning}, and (2) \textit{synthetic injection}, where textual errors are deliberately introduced to create contrastive pairs~\citep{zhou2024aligning}.

While these strategies have achieved some success, they are insufficient for resolving the specific visual deficits we target. This insufficiency stems from a core methodological issue: both the DPO loss and its accompanying data construction methods are often inherited from text-only domains without critical adaptation for multimodality. This results in three critical flaws. First, the corrective signal, derived from textual differences, is not targeted at the core visual failures we identify---the Perceptual Bottleneck and insufficient Visual Robustness. Second, the data is inherently \textit{off-policy} due to its reliance on external sources, which is inefficient for DPO's KL-constrained objective. Finally, this dependence on external feedback makes the process prohibitively expensive and difficult to scale.

\section{Preliminaries: Critical Issues in Preference Data}
\label{sec:preliminaries}
The efficacy of Direct Preference Optimization (DPO) is highly sensitive to the properties of the preference data used for training. Below, we dissect two critical axes of concern that motivate our work: the visual grounding of the data and its relation to the model's own policy.
\subsection{Vision-Agnostic vs. Vision-Grounded Data}
\label{subsec:vision_agnostic}

We formalize why Visually-Grounded Contrastive Preference Generation (VCPG) strategy provides a more effective optimization signal for the vision-dominant parameters \(\theta_1\) compared to traditional Post-hoc Semantic Correction (PSC) strategies, which are vision-agnostic. 

\textbf{Gradient of Vision-Dominant Parameters.} We decouple the model parameters \(\theta\) into two distinct sets: \(\theta_1\), the vision-dominant set, and \(\theta_2\), the language-dominant set, as detailed in Appendix~\ref{sec:appendix_theory}. The gradient for \(\theta_1\) in DPO is driven by the difference between the log-likelihood gradients of the winning and losing texts:
\begin{equation}
    \Delta(\theta_1) \triangleq 
\nabla_{\theta_1} \log \pi_\theta\!\left(y^w \mid I_{\text{orig}}, P\right) 
- \nabla_{\theta_1} \log \pi_\theta\!\left(y^l \mid I_{\text{orig}}, P\right)
\end{equation}
Thus, the effectiveness of a preference data strategy hinges on its ability to maximize the expected norm of this gradient difference term, \(\mathbb{E}[\|\Delta(\theta_1)\|]\).

\textbf{Information-Theoretic Analysis via Visual Dependency.} To understand how the properties of preference data influence the gradient norm \(\|\Delta(\theta_1)\|\), we introduce the concept of Visual Information Dependency (VID):
Let \((I,P)\sim\mathcal{D}\), \(F_v=g_\theta(I)\), and \(Y\sim\pi_\theta(\cdot\mid I,P)\); we define \(\mathrm{VID} \triangleq I(Y;F_v\mid P,\theta)\).
This quantity quantifies how much additional information the visual features \(F_v\) provide for generating the text \(Y\), beyond what is known from the prompt.

\begin{proposition}
The expected norm \(\mathbb{E}[\|\nabla_{\theta_1}\log\pi_\theta(Y\mid I,P)\|]\) is positively associated with VID (motivated by Information Bottleneck views~\citep{alemi2016deep}).
\end{proposition}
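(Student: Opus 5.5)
The statement is qualitative ("positively associated"), so the plan is to turn it into a precise inequality under explicit modelling assumptions and then prove that inequality. The route is to express the gradient with respect to \(\theta_1\) as a function of how strongly the visual features \(F_v=g_\theta(I)\) shift the output distribution, and to relate both quantities through a Fisher-information argument.

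\textbf{Step 1: reduce to a chain rule through \(F_v\).} Following the decomposition in Appendix~\ref{sec:appendix_theory}, I assume \(\theta_1\) influences the likelihood only through \(F_v\), so that \(\pi_\theta(Y\mid I,P)=q_{\theta_2}(Y\mid F_v,P)\) with \(F_v=g_{\theta_1}(I)\). The chain rule then gives
\begin{equation}
\nabla_{\theta_1}\log\pi_\theta(Y\mid I,P)=J_{\theta_1}(I)^{\top}\, s(Y,F_v,P),
\qquad s \triangleq \nabla_{F_v}\log q_{\theta_2}(Y\mid F_v,P).
\end{equation}
Here \(J_{\theta_1}\) is the Jacobian of \(g\). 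Assuming its singular values are bounded below by \(\sigma_{\min}>0\) on the data support, which is a nondegeneracy assumption I state explicitly, the problem reduces to lower-bounding \(\mathbb{E}\|s\|\).

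\textbf{Step 2: link the score to mutual information.} Define the conditional Fisher information \(\mathcal{I}_F(F_v)=\mathbb{E}_{Y}[s s^{\top}\mid F_v,P]\). I will compare \(q(\cdot\mid F_v,P)\) with the marginal \(\bar q(\cdot\mid P)=\mathbb{E}_{F_v}q(\cdot\mid F_v,P)\), so that
\begin{equation}
\mathrm{VID}=\mathbb{E}_{F_v}\,\mathrm{KL}\big(q(\cdot\mid F_v,P)\,\|\,\bar q\big).
\end{equation}
Under a local (small-perturbation) regime, where features vary within a ball of radius \(r\) around a reference \(\bar f\), a second-order expansion of the KL gives \(\mathrm{KL}\approx \tfrac12 (F_v-\bar f)^{\top}\mathcal{I}_F(F_v-\bar f)\). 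This yields
\begin{equation}
\mathrm{VID}\lesssim \tfrac{r^2}{2}\,\mathbb{E}\,\mathrm{tr}\,\mathcal{I}_F=\tfrac{r^2}{2}\,\mathbb{E}\|s\|^2 .
\end{equation}
For a rigorous version without Taylor expansion, I would instead apply a Poincaré/log-Sobolev-type inequality on the feature distribution, which bounds the variance of \(\log q\) (and hence the KL to the mixture) by \(C\,\mathbb{E}\|s\|^2\).

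\textbf{Step 3: pass from the second moment to the first.} To convert \(\mathbb{E}\|s\|^2\) into \(\mathbb{E}\|s\|\), I will use Paley–Zygmund or assume a bounded-kurtosis condition \(\mathbb{E}\|s\|^4\le\kappa(\mathbb{E}\|s\|^2)^2\). This gives \(\mathbb{E}\|s\|\ge \kappa^{-1/2}\sqrt{\mathbb{E}\|s\|^2}\), and combining the three steps produces
\begin{equation}
\mathbb{E}\|\nabla_{\theta_1}\log\pi_\theta\|\ge \sigma_{\min}\,\kappa^{-1/2}\,\sqrt{2\,\mathrm{VID}}/r .
\end{equation}
In particular, VID \(=0\) is consistent with zero expected score, while growing VID forces the gradient norm to grow. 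That is the precise sense of "positively associated."

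\textbf{Main obstacle.} The hard part is Step 2. The inequality relating mutual information to Fisher information holds only under regularity conditions: smooth log-density, a functional inequality on the feature distribution, and the local regime. The statement is in any case only heuristic ("motivated by Information Bottleneck views"). I would therefore first attempt the clean local expansion, and fall back on stating the result as a bound conditional on a Poincaré constant for the distribution of \(F_v\), making the assumption explicit rather than claiming a universal monotone relation.
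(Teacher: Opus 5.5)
The paper does not prove this statement. Appendix~\ref{sec:appendix_theory} restates it as Proposition~\ref{prop:grad-vid}, explicitly labelled an \emph{assumption} and motivated only by the Information Bottleneck view, and then uses it as a black box in Theorem~\ref{thm:psc-vcpg}. Your proposal therefore takes a genuinely different route. You make $\theta_1$ structural, acting only through $F_v=g_{\theta_1}(I)$; note this confines it to the visual pathway, which the paper's LoRA setup does not train. You then derive an explicit inequality.

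Steps 1 and 3 are sound. Read the Jacobian condition as $JJ^{\top}\succeq\sigma_{\min}^2 I$, i.e.\ full row rank. The H\"older bound $(\mathbb{E}\|s\|^2)^3\le(\mathbb{E}\|s\|)^2\,\mathbb{E}\|s\|^4$ gives exactly your $\kappa^{-1/2}$. Your local expansion in Step 2 is correct to leading order: with $\bar f=\mathbb{E}[F_v\mid P]$ one has $\mathrm{VID}\approx\tfrac12\mathrm{tr}\bigl(\mathcal{I}_F(\bar f)\,\mathrm{Cov}(F_v\mid P)\bigr)$.

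The paper's assumption buys the freedom to use the association in both directions, which Theorem~\ref{thm:psc-vcpg} does. Your derivation shows that only one direction is actually available. The same formula shows $\mathrm{VID}$ can vanish through $\mathrm{Cov}(F_v\mid P)\to 0$ while $\mathbb{E}\|s\|^2=\mathrm{tr}\,\mathcal{I}_F$, and hence the gradient, stays of order one; one image per prompt gives $\mathrm{VID}=0$ exactly. So your bound supports the lower bound $\alpha$ on the winning response. It does not support the claim $\mathbb{E}\|g(Y^l)\|\le\varepsilon$, nor the claim that close VIDs give close gradient norms, both made there. Any comparison between configurations, such as enhanced versus degraded inputs, also needs $\sigma_{\min}$, $\kappa$ and $r$ to be uniform across them.

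For a rigorous Step 2, use a log-Sobolev rather than a Poincar\'e inequality. Suppose the law $\mu$ of $F_v$ given $P$ satisfies LSI with constant $\rho$. Apply it to each posterior $\mu(\cdot\mid y)$, whose log-density ratio to $\mu$ has $F_v$-gradient exactly $s(y,\cdot)$. This gives $\mathrm{KL}(\mu(\cdot\mid y)\,\|\,\mu)\le(2\rho)^{-1}\mathbb{E}_{\mu(\cdot\mid y)}\|s\|^2$. Averaging over $y$ yields $\mathrm{VID}\le(2\rho)^{-1}\mathbb{E}\|s\|^2$ with no Taylor expansion and no local regime.

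A Poincar\'e inequality controls only $\mathrm{Var}_\mu\log q(y\mid\cdot)$, which does not by itself bound this KL. Be aware that LSI fails for an atomic law with several atoms, which is what a finite dataset induces. This hypothesis is therefore substantive, not a technicality.
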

This proposition reveals a limitation of PSC methods: since the preferred and dispreferred outputs are typically induced by similar visual evidence, we have \(I(Y^{w};F_v\mid P,\theta)\approx I(Y^{l};F_v\mid P,\theta)\), so gradient cancellation occurs, leading to a smaller value for \(\|\Delta(\theta_1)\|\). In contrast, VCPG generates pairs with significantly different VIDs, specifically by creating a Visual Information Disparity (\(I(Y^{w};F_v\mid P,\theta)\ \gg\ I(Y^{l};F_v\mid P,\theta)\)), which maximizes \(\|\Delta(\theta_1)\|\), providing a stronger optimization signal.

\textbf{Information Geometry Analysis via Fisher Information.} Next, we connect the gradient dynamics to the Fisher Information Matrix (FIM), a key tool for measuring parameter sensitivity.
\begin{proposition}[Information-geometric interpretation (heuristic)]
The Fisher information for the vision-dominant parameters \(\theta_1\), as measured by \(\mathrm{Tr}(F_{\theta_1})\), tends to increase with the KL-divergence between the representation posteriors induced by the preference pair \((y^w, y^l)\):
\begin{equation}
    \mathrm{Tr}(F_{\theta_1})\ \text{tends to increase with}\ 
    \mathbb{E}_{\mathcal{D}} \left[ D_{KL}\!\big(p_\theta(F_v\mid y^w,P) \parallel p_\theta(F_v\mid y^l,P)\big) \right].
\end{equation}
\end{proposition}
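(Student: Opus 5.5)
The plan is to make the heuristic precise through a local (second-order) analysis. We model the visual representation as a random variable \(F_v\) whose law depends on \(\theta_1\) through the encoder \(g_\theta\). The text then enters only through the posterior \(p_\theta(F_v\mid y,P)\propto \pi_\theta(y\mid F_v,P)\,p_\theta(F_v\mid P)\). Following the decoupling of Appendix~\ref{sec:appendix_theory}, we treat \(\theta_2\) as fixed. We also work in an exponential-family (or locally Gaussian) approximation for \(p_\theta(F_v\mid y,P)\), with mean \(\mu_\theta(y,P)\) and covariance \(\Sigma\) that varies slowly in \(\theta_1\).

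\textbf{Step 1 (chain rule).} Apply the chain rule to the Fisher information. The score \(\nabla_{\theta_1}\log p_\theta(F_v\mid y,P)\) is \(J_{\theta_1}^\top \nabla_{\mu}\log p\), where \(J_{\theta_1}=\partial\mu/\partial\theta_1\). Hence \(F_{\theta_1}=\mathbb{E}_y[J_{\theta_1}^\top\Sigma^{-1}J_{\theta_1}]\).

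\textbf{Step 2 (second-order KL expansion).} Under the same approximation,
\[
D_{KL}\big(p_\theta(F_v\mid y^w,P)\,\|\,p_\theta(F_v\mid y^l,P)\big)\approx \tfrac12 (\mu^w-\mu^l)^\top\Sigma^{-1}(\mu^w-\mu^l).
\]
Write \(\mu^w-\mu^l\approx J_{\theta_1}\delta\), where \(\delta\) is the effective parameter displacement that the pair induces along vision-dominant directions. In the exactly quadratic case this gives \(D_{KL}\approx\tfrac12\delta^\top F_{\theta_1}\delta\).

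\textbf{Step 3 (pass to the trace).} Bound the quadratic form via \(\delta^\top F\delta\le \|\delta\|^2\lambda_{\max}(F)\le\|\delta\|^2\,\mathrm{Tr}(F)\), using that \(F\succeq 0\). Taking \(\mathbb{E}_{\mathcal D}\) then shows that a large expected KL divergence forces \(\mathrm{Tr}(F_{\theta_1})\) to be large, for bounded \(\|\delta\|\). This is the monotone "tends to increase" relation. It also ties back to the preceding VID proposition: a VCPG pair with \(I(Y^w;F_v\mid P,\theta)\gg I(Y^l;F_v\mid P,\theta)\) produces well-separated posteriors, hence a large KL divergence, hence large Fisher information.

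\textbf{Main obstacle.} Step 3 only gives a lower-bound direction, and it relies on \(\delta\) being bounded and not aligned solely with low-curvature eigendirections. A genuine "increase" statement needs a converse. I would first try assuming isotropy, i.e.\ \(\delta\) drawn with covariance proportional to the identity across the data. Then \(\mathbb{E}[\delta^\top F\delta]=c\,\mathrm{Tr}(F)\), and the relation becomes an equality up to the constant \(c\), which is the sense in which I would state the proposition as a heuristic. Controlling the error of the local Gaussian approximation, and the \(\theta\)-dependence of \(\Sigma\), remains the step I would only argue informally.
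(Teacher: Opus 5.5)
There is a genuine gap, and it lies in which Fisher information you bound. Your $F_{\theta_1}=\mathbb{E}_y[J_{\theta_1}^\top\Sigma^{-1}J_{\theta_1}]$ is the Fisher information of the family $\theta_1\mapsto p_\theta(F_v\mid y,P)$, averaged over individual responses. It therefore depends only on the marginal distribution of responses, not on how they are paired into $(y^w,y^l)$. Re-pairing the same pool of responses can change $\mathbb{E}_{\mathcal D}[D_{KL}]$ substantially while your $\mathrm{Tr}(F_{\theta_1})$ stays exactly the same.

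In your Step 3, all of that variation is absorbed by $\delta$, which you define only implicitly through $J_{\theta_1}\delta\approx\mu^w-\mu^l$. Step 2 recasts a difference between two posteriors at the \emph{same} $\theta$ (different conditioning responses) as a parameter displacement, and that recasting is just a definition of $\delta$. So the ``bounded $\|\delta\|$'' hypothesis is not a technicality that isotropy can patch. It is exactly what fails: with curvature fixed, $\delta$ must grow with the KL. The isotropic version $\mathbb{E}[\delta\delta^\top]=cI$ only moves that growth into $c$.

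The missing idea is that the paper's $F_{\theta_1}$ is the Fisher information of the DPO \emph{preference likelihood} $\mathcal{P}_\theta(y^w\succ y^l\mid I_{\text{orig}},P)=\sigma(\hat r_w-\hat r_l)$. By the DPO gradient computation, its $\theta_1$-score is $s_1=A\,\Delta(\theta_1)$ with $A=\beta\,\sigma(\hat r_l-\hat r_w)$, so
\[
\mathrm{Tr}(F_{\theta_1})=\mathbb{E}_{\mathcal D}[A^2\|\Delta(\theta_1)\|^2].
\]
This quantity depends on the pair through $\Delta(\theta_1)=g(y^w)-g(y^l)$, and it is the squared $\theta_1$-part of the DPO update, which is what the proposition is meant to describe. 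The paper then argues, heuristically, that a larger KL separation between $p_\theta(F_v\mid y^w,P)$ and $p_\theta(F_v\mid y^l,P)$ requires a larger parameter displacement toward $y^w$'s representation pattern. That displacement shows up as a larger $\|\Delta(\theta_1)\|$.

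Your second-order expansion $D_{KL}\approx\tfrac12\delta^\top G\,\delta$, with $G$ your posterior Fisher, is a reasonable way to make the first half of that heuristic concrete, read as ``fixed curvature, larger KL, larger $\delta$''. It yields the proposition only once you link $\delta$ to $\Delta(\theta_1)$ and take $F_{\theta_1}$ to be the preference-likelihood Fisher rather than the posterior one.
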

This provides a key insight: increasing Fisher information in parameter space corresponds to increasing the separation in the representation space. By generating preference pairs with a large visual disparity between \(y^w\) and \(y^l\), VCPG tends to push the posteriors \(p_\theta(F_v\mid y^w,P)\) and \(p_\theta(F_v\mid y^l,P)\) farther apart, thereby increasing the KL-divergence and the induced Fisher information on \(\theta_1\).
Details and assumptions are provided in Appendix~\ref{sec:appendix_theory}.

\subsection{On-Policy vs. Off-Policy Data}
\label{subsec:on_off_policy}
Given that DPO ensures the trained policy remains close to the reference policy~($\pi_{\text{ref}}$), we define a response~$y$ as on-policy if it has a non-trivial probability under the reference model: $\pi_{\text{ref}}(y\mid I, P) > \epsilon$, where~$\epsilon$ is a small positive constant~\citep{yang2025mitigating}.

The core of this issue lies in DPO's objective function, which constrains the trained policy \(\pi_\theta\) to remain close to the reference policy \(\pi_{\text{ref}}\) via a KL-divergence penalty. This introduces a fundamental limitation for strictly off-policy data. If a preferred response \(y^{w}\) lies outside the support of the reference model, denoted as \(\text{supp}(\pi_{\text{ref}})\), such that \(\pi_{\text{ref}}(y^{w}\mid I, P) \to 0\), then the KL-divergence term \(D_{KL}(\pi_\theta || \pi_{\text{ref}}) \to \infty\). This effectively confines the optimized policy's support within that of the reference model, i.e., \(\text{supp}(\pi_\theta) \subseteq \text{supp}(\pi_{\text{ref}})\), making it mathematically impossible to learn responses that \(\pi_{\text{ref}}\) would never generate.

This theoretical limitation manifests as a vanishing gradient during training. The DPO gradient is proportional to a sigmoid weighting factor, \(\nabla_\theta \mathcal{L}_{\text{DPO}} \propto \sigma(\hat{r}_l - \hat{r}_w) \cdot \Delta_\nabla\), where \(\Delta_\nabla\) is the log-likelihood gradient difference. For a strictly off-policy \(y^{w}\), the implicit reward \(\hat{r}_w \triangleq \beta \log (\pi_\theta(y^{w}\mid I, P)/\pi_{\text{ref}}(y^{w}\mid I, P))\) diverges to infinity as its denominator approaches zero:
\begin{equation*}
    \pi_{\text{ref}}(y^{w}\mid I, P) \to 0 \implies \hat{r}_w \to +\infty
\end{equation*}
Consequently, the argument of the sigmoid function becomes infinitely negative, driving the weighting factor to zero:
\begin{equation*}
    \hat{r}_l - \hat{r}_w \to -\infty \implies \sigma(\hat{r}_l - \hat{r}_w) \to 0
\end{equation*}
This effectively nullifies the entire gradient update for the most informative, off-policy samples, rendering the learning process highly inefficient. This analysis highlights the critical need for on-policy data generation, a cornerstone of our P\textsuperscript{2}-DPO framework.

\section{Methodology}
\begin{figure*}[t!] 
    \vspace{-0.5\baselineskip}
    \centering 
    \includegraphics[width=1.0\textwidth]{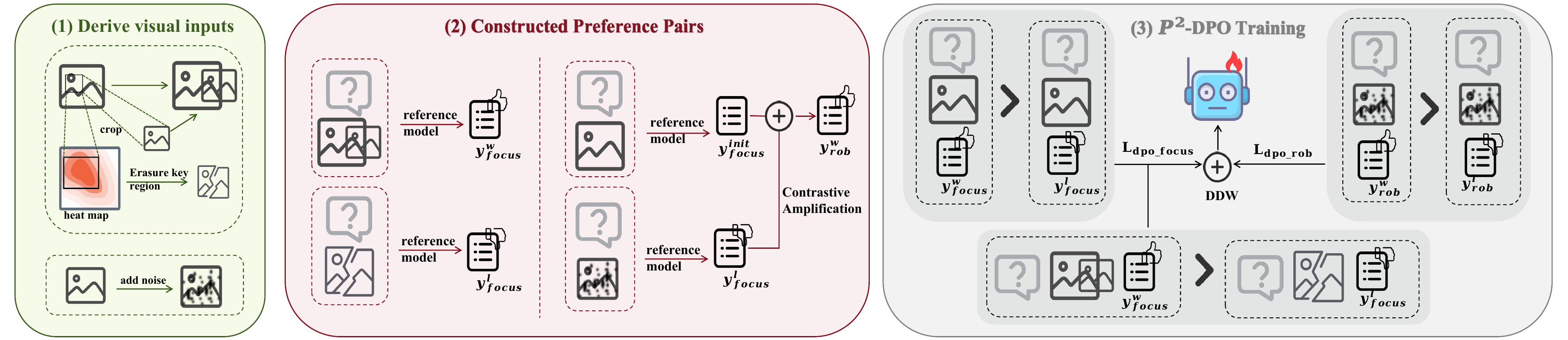}
    \captionsetup{font=scriptsize, skip=2.5pt}
    \caption{An overview of our proposed P\textsuperscript{2}-DPO framework.
    The process flows from left to right: (1) Derive Visual Inputs: Based on an initial forward pass to obtain an attention map~($\mathcal{A}$), we create an enhanced input via cropping~($I_{\text{aug}}$), a degraded input via erasure~($I_{\text{deg}}$), and a noisy input~($I_{\text{noise}}$), alongside the original image~($I$).
    (2) Generate Preference Pairs: The reference model generates two orthogonal preference pairs. (3) P\textsuperscript{2}-DPO Training: Difference losses are dynamically combined.}
    \label{fig:framework_overview} 
    \vspace{-1.6\baselineskip}
\end{figure*}
Recognizing that Perceptual Processing failures are often ``last-mile'' errors amenable to self-correction, we introduce P\textsuperscript{2}-DPO, a fully self-driven DPO framework that operates without reliance on external feedback. It couples a mechanism for the autonomous generation of preference pairs with a tailored optimization process. This approach enables the targeted construction of on-policy, vision-aware data, which circumvents the core inefficiencies of conventional, post-hoc preference collection.
\subsection{Vision-aware on-policy data generation}
To address the targeted visual deficits, our approach moves beyond indirect textual revisions by directly intervening on the visual input to generate preference pairs. This causal generation process inherently yields data aligned with our Visually-Grounded Contrastive Preference Generation (VCPG) strategy. Furthermore, because these pairs are derived entirely from the model's own responses without external feedback, the resulting data is on-policy by definition. Our pipeline is designed to generate two distinct types of preference pairs from a single image-prompt instance, a strategy that maximizes data efficiency and obviates the need for human annotation. These generated pairs are then subjected to a rigorous filtering process to ensure their quality.

\subsubsection{Focus-and-Enhance Preference Pairs}
To overcome the Perceptual Bottleneck, we build upon the crucial observation that a model's attention maps often correctly localize the relevant region even when it is hallucinating. Inspired by previous work~\citep{zhang2025mllms}, which shows that supplementing the input with cropped salient regions can mitigate such errors, we adopt a similar strategy to generate our preference pairs. 

Our process begins by using carefully crafted prompts (see Appendix~\ref{appendix:prompts} for details), designed to guide the model's focus towards key visual areas, to perform a forward pass through the reference model~\(M_{\text{ref}}\). We obtain the final answer-to-image attention maps~\(\mathcal{A}\) by composing two intermediate, head-averaged attention matrices. Specifically, for both the answer-to-token attention (\(\mathcal{A}_{\text{tok}}\)) and the token-to-image attention (\(\mathcal{A}_{\text{img}}\)), we aggregate the scores from all \(H\) heads at each layer via averaging, i.e., \(\bar{\mathcal{A}} = \frac{1}{H}\sum_{h=1}^{H} \mathcal{A}_h\). The final attention map, quantifying the relevance of each of the \(N^2\) input image patches to the answer, is then computed via their tensor product: \(\mathcal{A} = \mathcal{A}_{\text{tok}} \otimes \mathcal{A}_{\text{img}} \in \mathbb{R}^{L \times L_c \times 1 \times N^2}\), where \(L\) and \(L_c\) denote the number of layers in the LLM and the connector, respectively. From these maps, we employ an adaptive cropping algorithm to extract a salient region, \(I_{\text{crop}}\), that optimally isolates the most prominent visual entity~(see Appendix~\ref{appendix:crop} for algorithm details). Based on this crop, we derive two distinct visual inputs. The Enhanced Input (\(I_{\text{aug}}\)) is formed by combining the original image \(I\) with its salient crop \(I_{\text{crop}}\), denoted as \(I_{\text{aug}} = \text{Combine}(I, I_{\text{crop}})\). Conversely, the Degraded Input (\(I_{\text{deg}}\)) is created by applying a mild erasure within the bounding box of the crop on the original image, denoted as \(I_{\text{deg}} = \text{Erase}(I, \text{Bbox}(I_{\text{crop}}))\).

With these two visual contexts established, we generate the preference pair \((y_{\text{focus}}^w, y_{\text{focus}}^l)\). The winning response, \(y_{\text{focus}}^w = M_{\text{ref}}(I_{\text{aug}}, P_{\text{enh}})\), is generated by conditioning the model on the enhanced input and a meticulously designed prompt template. The losing response, \( y_{\text{focus}}^l = M_{\text{ref}}(I_{\text{deg}}, P) \), is generated from the degraded input and the original prompt. This procedure yields a preference pair where the winning response is derived from a state of heightened perceptual clarity, while the losing response stems from a state of targeted visual impairment.

\subsubsection{Visual Robustness Preference Pairs}
We follow a carefully designed three-stage process. The procedure begins with a single prompt \(P\) and an original image \(I\).

First, we create a degraded version of the image, \(I_{\text{noise}} = \text{Noise}(I)\), by perturbing it with Gaussian noise. The losing response, \(y_{\text{rob}}^l = M_{\text{ref}}(I_{\text{noise}}, P)\), is then generated by conditioning our reference model \(M_{\text{ref}}\) on this corrupted visual input. This response represents the model's output when faced with low-fidelity signals. Next, an initial, higher-quality response, \(y_{\text{rob}}^{\text{init}} = M_{\text{ref}}(I, P)\), is generated by conditioning the model on the original, high-fidelity image \(I\).

To refine this initial response into the definitive winning response \(y_{\text{rob}}^w\), we introduce \textit{Contrastive Amplification}. This decoding strategy, building on contrastive principles, has been shown to reduce output hallucinations~\citep{leng2024mitigating}. The method designates the model conditioned on the original image \(I\) as an \textit{expert} (EP) and on the degraded image \(I_{\text{noise}}\) as an \textit{amateur} (AT). At each decoding step \(t\), it amplifies the difference between their logits, but critically, only within a candidate set \(\mathcal{V}_{\text{head}}\) of plausible tokens pre-determined by the expert:
\begin{equation} \label{eq:contrastive_amplification_final}
    \small
    y_t \sim \text{softmax}\left( (1 + \lambda_{\text{ca}}) \cdot \text{logits}_{\text{EP}}(y_t) - \lambda_{\text{ca}} \cdot \text{logits}_{\text{AT}}(y_t) \right)
\end{equation}
subject to \(y_t \in \mathcal{V}_{\text{head}}(y_{<t})\), where \(\lambda_{\text{ca}}\) is a hyperparameter. This process steers generation towards visual fidelity while preserving linguistic coherence.

The output of our data generation pipeline for a single input \((I, P)\) is a set of two orthogonal preference pairs: the Focus-and-Enhance Pair, denoted as \(D_{\text{focus}} = (y_{\text{focus}}^w, y_{\text{focus}}^l)\), and the Visual Robustness Pairs, \(D_{\text{rob}} = (y_{\text{rob}}^w, y_{\text{rob}}^l)\). To ensure both the quality and utility of this raw dataset, we subsequently apply a principled filtering process. A pair is retained only if it meets a two-fold criterion: (1) the perplexity (PPL) of both responses must fall below a fluency threshold \(\tau_{\text{ppl}}\), and (2) its log-probability margin, \(M = \log p_{\text{ref}}(y^{w}) - \log p_{\text{ref}}(y^{l})\), must lie within an effective learning range \([\theta_{\text{low}}, \theta_{\text{high}}]\). Further threshold choices are provided in Appendix~\ref{appendix:Data Filtering Implementation Details}.

\subsection{Calibration Direct Preference Optimization}

Grounded in the insight that Perceptual Processing failures are on-policy errors requiring probability redistribution, our training objective is a composite DPO loss. It synergistically integrates signals from two orthogonal preference pairs—targeting the Perceptual Bottleneck and Visual Robustness respectively—and unifies them via a dynamic weighting mechanism that adapts to the model's evolving deficits.
\subsubsection{Objective for the Perceptual Bottleneck}
To address the Perceptual Bottleneck, we begin with the standard DPO loss on our Focus-and-Enhance pairs ($D_{\text{focus}}$), denoted as $\mathcal{L}_{\text{dpo\_focus}}$:
\begin{equation} \label{eq:dpo_focus}
    \mathcal{L}_{\text{dpo\_focus}} = -\mathbb{E}_{(y_{\text{focus}}^w, y_{\text{focus}}^l) \sim D_{\text{focus}}} \left[ \log \sigma \left( \beta \log\frac{\pi_\theta(y_{\text{focus}}^w|I, P)}{\pi_{\text{ref}}(y_{\text{focus}}^w|I, P)} - \beta \log\frac{\pi_\theta(y_{\text{focus}}^l|I, P)}{\pi_{\text{ref}}(y_{\text{focus}}^l|I, P)} \right) \right]
\end{equation}
Even though our Focus-and-Enhance pairs provide some visual grounding, the standard DPO loss learns a purely correlational signal, rewarding the preference for \(y_{\text{focus}}^w\) over \(y_{\text{focus}}^l\) without attributing it to the causal impact of the visual intervention. To instill this causality more robustly, we introduce a complementary Calibration Loss (\(\mathcal{L}_{\text{Calib}}\)). This objective is derived from a preference model over the Perceptual Confidence Gain, \(\Delta_\pi(y) \triangleq \log \pi(y|I_{\text{aug}}) - \log \pi(y|I_{\text{deg}})\). As formally proven in Appendix~\ref{appendix:calib_derivation}, minimizing \(\mathcal{L}_{\text{Calib}}\) is equivalent to maximizing \(\Delta_{\pi_\theta}(y_{\text{focus}}^w)\), which in turn maximizes the Visual Information Dependency (VID) of the winning response, defined as the mutual information \(I(Y_{\text{focus}}^{w}; F_v^+ \mid P,\theta)\). The resulting loss function is:
\begin{equation} \label{eq:calib_loss_appendix}
    \mathcal{L}_{\text{Calib}} = -\mathbb{E}_{(y_{\text{focus}}^w, y_{\text{focus}}^l) \sim D_{\text{focus}}} \left[ \log \sigma \biggl( \beta \log\frac{\pi_\theta(y_{\text{focus}}^w|I, I_{\text{crop}}, P)}{\pi_\theta(y_{\text{focus}}^w|I_{\text{deg}}, P)} - \beta \log\frac{\pi_{\text{ref}}(y_{\text{focus}}^l|I, I_{\text{crop}}, P)}{\pi_{\text{ref}}(y_{\text{focus}}^l|I_{\text{deg}}, P)} \biggr) \right]
\end{equation}
The complete objective for the Perceptual Bottleneck, $\mathcal{L}_{\text{focus}}$, is a weighted sum of these two components, balanced by a hyperparameter $\lambda_{\text{calib}}$:
\begin{equation} \label{eq:focus_objective}
    \mathcal{L}_{\text{focus}} = \mathcal{L}_{\text{dpo\_focus}} + \lambda_{\text{calib}} \cdot \mathcal{L}_{\text{Calib}}
\end{equation}

\subsubsection{Objective for Visual Robustness}
Symmetrically, to tackle the Lack of Robustness, we apply the DPO loss to our Visual Robustness pairs ($D_{\text{rob}}$). This loss, $\mathcal{L}_{\text{dpo\_rob}}$, trains the model to see through noise by favoring the ideal response $y_{\text{rob}}^w$ over the corrupted one $y_{\text{rob}}^l$, even when conditioned on the same noisy input $I_{\text{noise}}$:
\begin{equation} \label{eq:dpo_rob}
    \mathcal{L}_{\text{dpo\_rob}} = -\mathbb{E}_{(y_{\text{rob}}^w, y_{\text{rob}}^l) \sim D_{\text{rob}}} \left[ \log \sigma \left( \beta \log\frac{\pi_\theta(y_{\text{rob}}^w|I_{\text{noise}}, P)}{\pi_{\text{ref}}(y_{\text{rob}}^w|I_{\text{noise}}, P)} - \beta \log\frac{\pi_\theta(y_{\text{rob}}^l|I_{\text{noise}}, P)}{\pi_{\text{ref}}(y_{\text{rob}}^l|I_{\text{noise}}, P)} \right) \right]
\end{equation}

\subsubsection{Dynamic Deficit-Weighting}

While both preference pairs originate from the same input \((I, P)\), they address different scenarios. Focus-and-Enhance excels at improving local perception, whereas Visual Robustness targets the global robustness, meaning their training signals should be dynamically balanced. To address this, we introduce a novel mechanism called \textit{Dynamic Deficit-Weighting} (DDW). The idea behind DDW is that if the cropped key region has a higher CLIP score with the text compared to the original image, it indicates that the cropped region is highly relevant to the query, and thus its training proportion can be increased. 
This mechanism first computes a \textit{Perceptual Gain Ratio}, \(r\), using a pre-trained CLIP model to diagnose the primary deficit for a given sample, where \(r = \frac{\text{CLIPScore}(P, I_{\text{crop}})}{\text{CLIPScore}(P, I)}\). This ratio, which indicates if the bottleneck (\(r > 1\)) or robustness is the dominant issue, is then mapped to an adjustment factor \(\alpha = \alpha_{\text{max}} \cdot \tanh\left(\frac{r - 1.0}{\tau}\right)\) that dynamically allocates weights as \(w_{\text{focus/robust}} = w_{\text{base}} \pm \alpha\). Our final unified objective, \(\mathcal{L}_{\text{total}}\), is the dynamically weighted sum over the mini-batch, enabling tailored corrective pressure for each sample:
\begin{equation} \label{eq:total_loss_final}
    \mathcal{L}_{\text{total}} = \mathbb{E} \Big[ w_{\text{focus}} \cdot \mathcal{L}_{\text{focus}} + w_{\text{robust}} \cdot \mathcal{L}_{\text{dpo\_rob}} \Big]
\end{equation}

\section{Experiments}
\label{sec:experiments}

Our comprehensive evaluation assesses both the effectiveness and the underlying mechanisms of P\textsuperscript{2}-DPO. First, we perform targeted validation of our method's ability to rectify the specific Perceptual Processing deficits we identified. Second, we broaden the scope to a benchmark comparison, evaluating P\textsuperscript{2}-DPO against prominent alignment methods under fair and controlled conditions. Third, we assess data efficiency by examining whether our on-policy, vision-aware data is more sample-efficient for DPO than traditional off-policy sources. Finally, to understand the contributions of each component within our framework, we perform an ablation study to isolate their individual effects.

\subsection{Experimental Setup}
\textbf{Base Models.} Our experiments are primarily built upon the open-source \textbf{LLaVA-1.5-7B}~\citep{liu2024improved}. To demonstrate the generalizability of our method across different model architectures and sizes, we also conduct experiments on the recently released \textbf{Qwen2.5-VL-7B/3B}~\citep{bai2025qwen2}.

\textbf{Data Generation.} Our preference pairs are generated using prompts (image-question) sourced from the \textbf{RLHF-V} dataset~\citep{yu2024rlhf}. We \textbf{do not} use any of the provided human preference labels, ensuring that our data generation requires no human feedback or annotation.

\textbf{Evaluation Benchmarks.} We conduct a comprehensive evaluation on a suite of standard hallucination and trustworthiness benchmarks, strictly adhering to the official evaluation protocol for each.
\begin{itemize}[nosep,leftmargin=*]
    \item \textbf{POPE}~\citep{li2023evaluating}: Probes for object-level hallucinations through binary (yes/no) questions about object presence.
    \item \textbf{HallusionBench}~\citep{guan2024hallusionbench}: Evaluates fine-grained discrimination, testing a model's ability to distinguish objects from their contexts and attributes.
    \item \textbf{MMHal-Bench}~\citep{sun2023aligning}: A question-answering benchmark that leverages GPT-4 for nuanced scoring of factual consistency regarding detailed object properties.
    \item \textbf{AMBER}~\citep{wang2023amber}: A multi-dimensional benchmark that assesses model trustworthiness by evaluating its generated text against rich, object-level annotations.
    \item \textbf{TextVQA}~\citep{singh2019towards}: Leveraged for our targeted Perceptual Bottleneck analysis, its ground-truth bounding boxes are essential to quantitatively measuring attention alignment.
\end{itemize}

\subsection{Targeted Validation of Perceptual Deficits}
\label{sec:targeted_analysis}

\begin{wraptable}{r}{0.35\columnwidth} 
    \vspace{-20pt}
    \centering
    \scriptsize
    \setlength{\tabcolsep}{1.5pt} 
    \captionsetup{font=scriptsize, skip=2.5pt} 
    \caption{
        Evaluation on AFR and Processing Accuracy (AFR $>$ 14.0).
    }
    \label{tab:bottleneck_results}
    \begin{tabular}{lcc}
        \toprule
        \textbf{Model} & \textbf{AFR (Avg. $\uparrow$)} & \textbf{P-Acc. (\% $\uparrow$)} \\
        \midrule
        LLaVA-1.5-7B & 14.73 & 66.29 \\
        \midrule
        + DPO$_{\text{RLHF}}$ & 15.57~{\tiny\textcolor{ForestGreen}{$\uparrow$0.84}} & 65.71~{\tiny\textcolor{red}{$\downarrow$0.58}} \\
        \midrule
        \textbf{+ P\textsuperscript{2}-DPO} & \textbf{18.71}~{\tiny\textcolor{ForestGreen}{$\uparrow$3.98}} & \textbf{70.10}~{\tiny\textcolor{ForestGreen}{$\uparrow$3.81}} \\
        \bottomrule
    \end{tabular}
    \vspace{-15pt}
\end{wraptable}

We first conduct two targeted experiments to directly validate our method's efficacy in resolving the specific Perceptual Processing failures identified in the introduction.

\textbf{Validating Perceptual Bottleneck Mitigation.} To evaluate the Perceptual Bottleneck, we select TextVQA~\citep{singh2019towards} for its ground-truth bounding boxes, which facilitate precise quantitative measurement. 
Inspired by prior work for evaluating visual grounding~\citep{zhang2025mllms}, we use the \textit{Attention Focus Ratio} (AFR)---the ratio of mean attention inside vs. outside the answer's box---to quantify visual \textit{perception}. To isolate subsequent \textit{processing} failures, we then measure \textit{Processing Accuracy} (P-Acc.) in a high-focus subset where the AFR already exceeds our baseline's average (AFR $>$ 14.0). The results in Table~\ref{tab:bottleneck_results} reveal baselines exhibit high AFR but poor P-Acc., indicating that attention is accurately localized, but it still gives a wrong answer. In stark contrast, P\textsuperscript{2}-DPO significantly boosts both metrics, effectively bridging this perception-processing gap and confirming it rectifies the bottleneck.
{ 
    \setlength{\dbltextfloatsep}{0pt} 
    \setlength{\belowcaptionskip}{-15pt}  

    \begin{figure*}[t!] 
        \centering 
        \includegraphics[width=\textwidth]{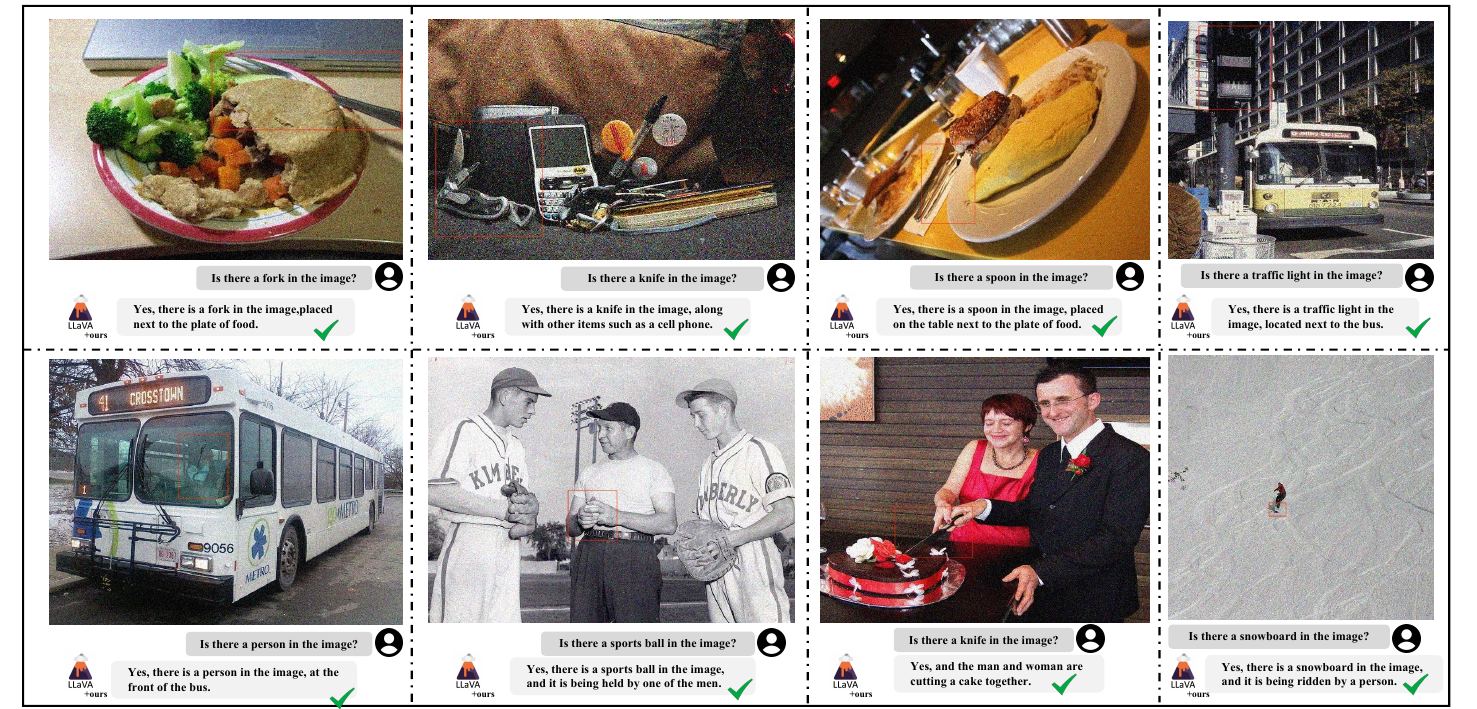} 
        \captionsetup{font=scriptsize, skip=2.5pt}
        \caption{The model's response effect for blurry images($\sigma=0.20$)}
        \label{fig:wide_figure}
    \end{figure*} 
} 
\begin{wrapfigure}{l}{0.4\columnwidth} 
    \vspace{-12pt} 
    \centering
    \includegraphics[width=\linewidth]{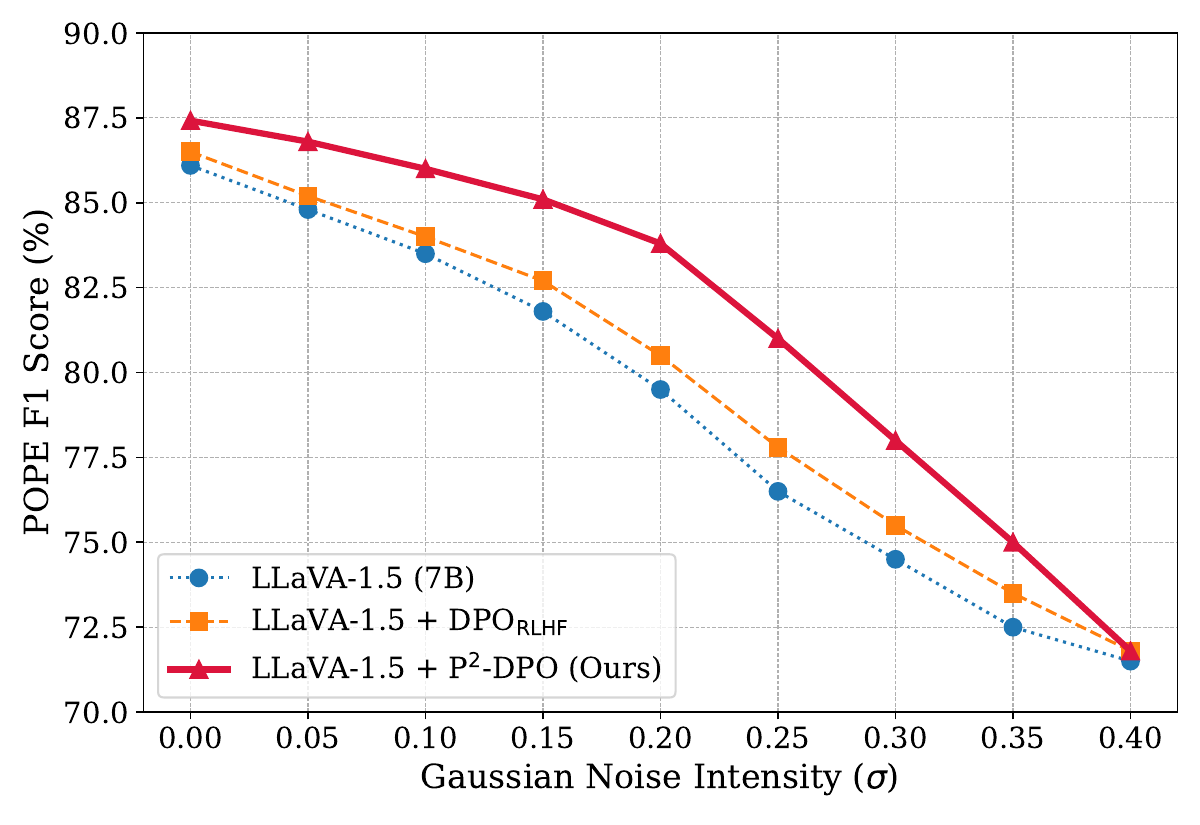}
    \captionsetup{font=scriptsize, skip=2.5pt} 
    \caption{
        Model accuracy on the POPE benchmark under increasing levels of Gaussian noise ($\sigma$). 
    }
    \label{fig:robustness_curve}
    \vspace{-15pt} 
\end{wrapfigure}

\textbf{Robustness to Visual Perturbations.} We evaluate Visual Robustness via a systematic stress test on the POPE benchmark~\citep{li2023evaluating}.
Gaussian noise with increasing standard deviation ($\sigma$) is added to all images, and the resulting F1 scores are measured.
Figure~\ref{fig:robustness_curve} shows that performance degrades with stronger perturbations, but robustness to light noise---common in real-world settings---is especially critical.
We compare original LLaVA-1.5-7B, standard DPO on RLHF-V data, and our method. While all models decline under heavy noise, P\textsuperscript{2}-DPO demonstrates significantly higher stability at low-to-moderate noise levels, outperforming LLaVA-1.5-7B by over 4 F1 points at $\sigma=0.20$. As shown in Figure~\ref{fig:wide_figure}, we find that the model provides correct answers even when visual degradation is difficult for humans to recognize. These results validate our design improves early-stage robustness and mitigates premature degradation under minor perturbations.

\subsection{Benchmark Comparison}
\label{sec:benchmark_comparison}
The key advantage of P\textsuperscript{2}-DPO is its ability to achieve \emph{competitive} performance without relying on costly human or external AI annotations. To ensure fairness, we compare against prior work with similar data scales and compute budgets. As shown in Table~\ref{tab:main_results}, our method consistently improves hallucination-sensitive metrics. On LLaVA-1.5-7B, P\textsuperscript{2}-DPO not only achieves top scores across POPE, HallusionBench, and MMHal-Bench, but also demonstrates a particularly dramatic improvement on AMBER's relational reasoning, boosting the \(F1_{\text{R}}\) score by a substantial \textbf{+8.5 points} over the base model.

We further validate the generalization ability on Qwen2.5-VL-3B, where P\textsuperscript{2}-DPO again shows consistent gains. Notably, it achieves the most significant improvement on the challenging MMHal-Bench, reducing the hallucination rate by \textbf{-3.13\%} while simultaneously boosting the AMBER relational reasoning score to \textbf{80.9 (+3.0)}. These results demonstrate three key advantages: (i) consistent gains on hallucination-sensitive benchmarks, (ii) stable improvements across different architectures, and (iii) annotation efficiency, since all benefits are obtained without external supervision. We also evaluate Qwen2.5-VL-7B and observe similar trends. Taken together, they confirm that by enhancing \emph{Perceptual Processing}, P\textsuperscript{2}-DPO enables models to better exploit inherent capabilities and deliver robust improvements at no additional annotation cost.

\begin{table*}[t]
\centering
\captionsetup{font=scriptsize, skip=2.5pt}
\caption{
    Evaluation of P\textsuperscript{2}-DPO against prominent alignment methods on core hallucination benchmarks.
    Baselines include the training-free VCD~\citep{leng2024mitigating}, the AI-feedback-driven HA-DPO~\citep{zhao2023beyond}, and methods trained on human preferences (DPO$_{\text{RLHF-V}}$, V-DPO$_{\text{RLHF-V}}$) from the RLHF-V dataset~\citep{yu2024rlhf}.
    Our method, using only self-generated data, achieves highly competitive or superior results.
    \textbf{Arrows show deltas vs. the corresponding \emph{Base} in each block:} green = improvement, red = degradation; for $\downarrow$ metrics (MMHal Hal, AMBER CHAIRi/Hal), smaller is better so green arrows point downward.
}
\label{tab:main_results}
\resizebox{\textwidth}{!}{%
\begin{tabular}{lc *{14}{c}}
\toprule
\multirow{2}{*}{\textbf{Model}} & \multirow{2}{*}{\textbf{Pairs Source}} & \multicolumn{4}{c}{\textbf{POPE (F1) $\uparrow$}} & \multicolumn{3}{c}{\textbf{HallusionBench (Acc) $\uparrow$}} & \multicolumn{2}{c}{\textbf{MMHal-Bench}} & \multicolumn{5}{c}{\textbf{AMBER}} \\
\cmidrule(lr){3-6} \cmidrule(lr){7-9} \cmidrule(lr){10-11} \cmidrule(lr){12-16}
& & Adv. & Pop. & Rand. & Avg. & qAcc & fAcc & aAcc & Hal$\downarrow$ & Score$\uparrow$ & CHAIRi$\downarrow$ & Hal$\downarrow$ & $F1_{\text{E}}\uparrow$ & $F1_{\text{A}}\uparrow$ & $F1_{\text{R}}\uparrow$ \\
\midrule
\addlinespace[2pt]
LLaVA-1.5-7B & Base & 81.80 & 84.36 & 89.12 & 85.10 & 13.90 & 20.23 & 48.16 & 0.62 & 1.97 & 7.8 & 36.4 & 83.2 & 64.1 & 62.4 \\
+ VCD & None
& 81.33
& 85.06
& 87.16
& 84.51
& --- & --- & ---
& 0.58
& 2.12
& --- & --- & --- & --- & --- \\
+ HA-DPO & AI
& 82.54
& 87.81
& \textbf{90.25}
& 86.86
& --- & --- & ---
& 0.60
& 1.97
& 6.7
& 30.9
& 88.1
& 66.1
& 68.8 \\
+ V-DPO$_{\text{RLHF-V}}$ & Human
& 84.05
& 87.91
& 89.90
& 87.28
& 17.36
& 19.94
& 51.63
& 0.56
& 2.16
& \textbf{5.6}
& 27.3
& 91.5
& 73.7
& 64.1 \\
+ V-DPO$_{\text{SAD}}$ & AI
& 83.77
& 87.62
& 89.57
& 86.98
& 22.20
& 21.10
& 55.31
& \textbf{0.53}
& 2.36
& 6.6
& 30.5
& \textbf{95.2}
& \textbf{76.1}
& 61.1 \\
\addlinespace[2pt]\midrule
+ DPO$_{\text{RLHF-V}}$ & Human
& 84.03
& 85.94
& 89.12
& 86.38
& 16.70
& 20.81
& 51.31
& 0.60
& 2.08
& 6.4
& 34.5
& 90.7
& 72.6
& 64.6 \\
 \rowcolor{gray!10}\textbf{+ P\textsuperscript{2}-DPO (Ours)} & Self
& \textbf{84.53} \goodup{2.73}
& \textbf{87.91} \goodup{3.55}
& 89.87 \goodup{0.75}
& \textbf{87.44} \goodup{2.34}
& \textbf{26.37} \goodup{12.47}
& \textbf{30.05} \goodup{9.82}
& \textbf{55.62} \goodup{7.46}
& 0.56 \gooddown{0.06}
& \textbf{2.43} \goodup{0.46}
& 5.9 \gooddown{1.9}
& \textbf{26.7} \gooddown{9.7}
& 92.6 \goodup{9.4}
& 71.7 \goodup{7.6}
& \textbf{70.9} \goodup{8.5} \\
\specialrule{0.5pt}{4pt}{1pt}
\specialrule{0.5pt}{1pt}{4pt}
\addlinespace[2pt]
Qwen2.5-VL-3B & Base & 86.26 & 88.05 & 89.79 & 88.03 & 28.13 & 34.10 & 58.10 & 0.42 & 3.38 & 6.8 & 39.1 & \textbf{93.2} & 83.7 & 77.9 \\
+ DPO$_{\text{RLHF-V}}$ & Human
& 86.41
& \textbf{88.59}
& 90.40
& 88.46
& 29.89
& 33.53
& \textbf{59.08}
& 0.40
& 3.41
& 6.8
& 39.5
& 93.0
& 83.5
& 78.1 \\
 \rowcolor{gray!10}\textbf{+ P\textsuperscript{2}-DPO (Ours)} & Self
& \textbf{86.88} \goodup{0.62}
& 88.48 \goodup{0.43}
& \textbf{91.50} \goodup{1.71}
& \textbf{88.95} \goodup{0.92}
& \textbf{30.11} \goodup{1.98}
& \textbf{34.68} \goodup{0.58}
& 58.64 \goodup{0.54}
& \textbf{0.39} \gooddown{0.03}
& \textbf{3.49} \goodup{0.11}
& \textbf{6.6} \gooddown{0.2}
& \textbf{38.2} \gooddown{0.9}
& 92.7 \baddown{0.5}
& \textbf{84.2} \goodup{0.5}
& \textbf{80.9} \goodup{3.0} \\
\addlinespace[2pt]\midrule
Qwen2.5-VL-7B & Base
& 84.92
& 86.78
& 89.01
& 86.90
& 31.87
& 37.57
& 61.03
& 0.349
& 3.47
& 5.4
& 29.0
& 97.3
& \textbf{83.8}
& 72.6 \\
+ DPO$_{\text{RLHF-V}}$ & Human
& 84.99
& 86.81
& 89.72
& 87.17
& 34.95
& 39.60
& 62.62
& 0.345
& 3.63
& 4.2
& 29.3
& \textbf{98.5}
& 82.5
& 72.2 \\
\rowcolor{gray!10}\textbf{+ P\textsuperscript{2}-DPO (Ours)} & Self
& \textbf{86.79} \goodup{1.87}
& \textbf{87.79} \goodup{1.01}
& \textbf{90.05} \goodup{1.04}
& \textbf{88.21} \goodup{1.31}
& \textbf{36.70} \goodup{4.83}
& \textbf{41.62} \goodup{4.05}
& \textbf{65.19} \goodup{4.16}
& \textbf{0.32} \gooddown{0.03}
& \textbf{3.94} \goodup{0.47}
& \textbf{3.9} \gooddown{1.5}
& \textbf{24.9} \gooddown{4.1}
& 98.5 \goodup{1.2}
& 83.4 \baddown{0.4}
& \textbf{72.9} \goodup{0.3} \\
\bottomrule
\end{tabular}%
} 
\end{table*}

\subsection{Analysis of On-Policy Data}
\label{sec:learning_efficiency}
\begin{wraptable}{r}{0.45\columnwidth}
    \vspace{-15pt}
    \centering

    \captionsetup{font=scriptsize} 

    \caption{IPS analysis reveals a severe policy gap in off-policy data, contrasted with the strong alignment of our on-policy methods.}
    \label{tab:ips_analysis}

    \renewcommand{\arraystretch}{1}
    \setlength{\tabcolsep}{3pt}
    
    \footnotesize 

    \begin{tabular}{lccc}
    \toprule
    \textbf{Metric} & \textbf{Off-Policy} & \textbf{Focus-} & \textbf{Visual} \\
    & \textbf{(RLHF-V)} & \textbf{Enhance} & \textbf{Robust.} \\
    \midrule
    Avg. IPS & -58.52 & 12.35 & 45.58 \\
    Std. Dev. & 147.08 & 25.61 & 60.56 \\
    Neg. Ratio & 68.3\% & 8.5\% & 44.2\% \\
    \bottomrule
    \end{tabular}
    
    \renewcommand{\arraystretch}{1.0}
    \vspace{-10pt}
\end{wraptable}

As argued in the preliminaries, preference data from off-policy sources (e.g., human feedback) creates a significant policy gap, which is challenging for DPO's KL-constrained objective to bridge. To verify this, we employ the Implicit Preference Strength (IPS) metric~\citep{zhao2023slic} to measure the policy divergence between various preference pairs and the base model. A large negative IPS signifies a substantial policy gap, predicting a difficult and inefficient learning process for DPO. Table~\ref{tab:ips_analysis} provides a stark contrast. The standard \textbf{Off-Policy} data exhibits a severely negative average IPS of \textbf{-58.52} with a \textbf{68.3\%} negative ratio, quantifying the substantial learning difficulty it imposes. In contrast, our on-policy data with positive IPS and low variance is not particularly difficult to learn and offers a stable learning signal.

\subsection{Ablation Study}
\label{sec:ablation_study}

\begin{wraptable}{r}{0.5\columnwidth}
    \vspace{-15pt}
    \centering
    \captionsetup{font=scriptsize}
    \caption{Ablation study of our method's components. We report performance on key hallucination and reasoning benchmarks. Each component contributes positively to the final performance.}
    \label{tab:ablation}

    \renewcommand{\arraystretch}{0.8}
    \setlength{\tabcolsep}{1.5pt}
    \scriptsize

    \begin{tabular}{lccccccc}
    \toprule
    & \multicolumn{1}{c}{\textbf{POPE}} &
      \multicolumn{1}{c}{\textbf{AMBER}} &
      \multicolumn{1}{c}{\textbf{MMHal}} &
      \multicolumn{3}{c}{\textbf{HallusionBench}} \\
    \cmidrule(lr){2-2}\cmidrule(lr){3-3}\cmidrule(lr){4-4}\cmidrule(lr){5-7}
    \textbf{Configuration} &
      \textbf{F1 $\uparrow$} &
      \textbf{Hal $\downarrow$} &
      \textbf{Score $\uparrow$} &
      \textbf{qAcc $\uparrow$} &
      \textbf{fAcc $\uparrow$} &
      \textbf{aAcc $\uparrow$} \\
    \midrule
    \textbf{P\textsuperscript{2}-DPO} & \textbf{87.42} & \textbf{29.27} & \textbf{2.43} & \textbf{26.37} & \textbf{30.05} & \textbf{55.62} \\
    \midrule
    \textit{Ablating Pairs:} & & & & & & \\
    \quad w/o FEPs & 85.84 & 34.70 & 2.30 & 23.56 & 26.62 & 51.10 \\
    \quad w/o VRPs & 85.27 & 33.60 & 2.26 & 21.02 & 25.71 & 52.10 \\
    \midrule
    \textit{Ablating Strategy:} & & & & & & \\
    \quad w/o $\mathcal{L}_{\text{Calib}}$ & 86.17 & 33.10 & 2.33 & 23.56 & 27.75 & 53.41 \\
    \quad w/o DDW & 86.68 & 31.20 & 2.39 & 23.96 & 28.61 & 52.70 \\
    \bottomrule
    \end{tabular}

    \renewcommand{\arraystretch}{1.0}
    \vspace{-12pt}
\end{wraptable}
We evaluate our two-part design—data construction, consisting of Focus-and-Enhance pairs (FEPs) and Visual Robustness pairs (VRPs), and training strategy, including the calibration loss (\(\mathcal{L}_{\text{Calib}}\)) and Dynamic Deficit-Weighting (DDW)—across four benchmarks (Table~\ref{tab:ablation}). 
Training only with VRPs under standard DPO leads to a performance drop, and similarly, training only with FEPs also degrades results, indicating that both data types are necessary and complementary. 
On the optimization side, discarding \(\mathcal{L}_{\text{Calib}}\) also leads to overall performance degradation, demonstrating the effectiveness of our loss design, while replacing Dynamic Deficit-Weighting with equal static weighting consistently degrades results, validating that different losses should carry different weights for different types of errors. 
Overall, each component contributes positively to the final performance.


\section{Conclusion}
\label{sec:conclusion}

We identify that LVLMs have a failure in Perceptual Processing---a critical ``last-mile''  problem. To address this, we propose P\textsuperscript{2}-DPO, a self-correcting framework that generates on-policy, vision-aware preference data to directly target these failures. Experiments confirm that P\textsuperscript{2}-DPO achieves performance comparable to or exceeding other training paradigms that rely on human annotation, across a range of hallucination benchmarks, while demonstrating significant advantages in learning efficiency and robustness.

\section{ACKNOWLEDGEMENTS}
This work was funded in part by the National Natural Science Foundation of China grant under number 62536004, and in part by the Key-Area Research and Development Program of Guangdong Province under number 2023B0303030001, and in part by the Science and Technology Program of Guangzhou under number 2024A04J6310, and in part by the Fundamental Research Funds for the Central Universities 2025ZYGXZR021.

\section*{Statements}
\paragraph{LLM Usage Statement}
In adherence to the ICLR 2026 policy, we disclose that Large Language Models (LLMs), including services based on GPT-4, were utilized in the preparation of this manuscript. Their role was primarily that of a writing assistant, grammar correction, and polishing the phrasing of sentences to improve clarity and readability. The core scientific ideas, experimental design, results analysis, and final conclusions were conceived and articulated exclusively by the human authors.

\paragraph{Ethics Statement}
We have considered the ethical implications of this work. Our research is built upon publicly available datasets and open-source models, and we do not anticipate any direct negative societal impacts. The preference data generated by our P\textsuperscript{2}-DPO framework is derived from the model's own outputs on these public datasets and does not involve any private or sensitive user information. While our work aims to mitigate model hallucination, which is a step towards safer AI, we acknowledge that no method is perfect, and the potential for misuse of Large Vision-Language Models remains a broader community concern. To foster transparency and enable further research into the safety and reliability of LVLMs, We commit to releasing the implementation of our method and the evaluation scripts upon publication.
\bibliography{iclr2026_conference}
\bibliographystyle{iclr2026_conference}

\appendix
\clearpage
\section{Theoretical Analysis: Targeted Parameter Optimization by Maximizing Visual Information Disparity}
\label{sec:appendix_theory}

This section aims to formally show that the Visually-Grounded Contrastive Preference Generation (VCPG) strategy more effectively concentrates the optimization pressure of Direct Preference Optimization (DPO) onto the targeted vision-dominant parameters \(\bm{\theta_1}\) than the Post-hoc Semantic Correction (PSC) strategy. We begin with formal definitions of the model and DPO gradients, introduce information-theoretic tools, and then connect the statistical properties of preference pairs to parameter sensitivity measured by the Fisher Information Matrix (FIM).

\subsection{Formal Definitions: Model, Parameters, and DPO Gradient}

\subsubsection{Parameter Decoupling}
We assume LVLMs exhibit functional specialization: different parameter subsets tend to serve different functions. For tractability, we adopt a coarse two-way partition: a vision-dominant subset that is more responsive to visual evidence and thus more plastic under vision-heavy supervision, and a language-dominant subset that predominantly supports textual reasoning. This specialization is supported by evidence from three complementary perspectives---architecturally, via purpose-built bridging modules such as the Q-Former in BLIP-2~\cite{li2023blip}; empirically, via PEFT results (e.g., LLaMA-Adapter~\cite{zhang2023llama}) suggesting that visual integration can be localized to a sparse parameter subset; and mechanistically, via interpretability analyses that associate visually grounded knowledge with specific neural pathways~\cite{fuchi2024erasing}. Guided by this evidence, we partition the parameters \(\theta\) into two sets:
\begin{itemize}[nosep,leftmargin=*]
    \item \textbf{\(\bm{\theta_1}\) (Vision-Dominant Parameter Set):} the subset of parameters whose updates are predominantly driven by visual evidence, i.e., they exhibit larger log-likelihood gradients when conditioning on the image \(I\) (and the corresponding fused visual representation \(F_v\)) varies.
    \item \textbf{\(\bm{\theta_2}\) (Language-Dominant Parameter Set):} the complementary subset whose updates are predominantly driven by linguistic signals, primarily supporting textual understanding and reasoning. These parameters operate over linguistic inputs as well as the fused visual representation \(F_v\); while chiefly governing text generation, they may still interact with visual information depending on task demands.
\end{itemize}

\subsubsection{DPO Loss and Gradient Derivation}
For a single training instance comprising an original image \(I_{\text{orig}}\), a text prompt \(P\), and a preference pair \((y^{w}, y^{l})\), the DPO loss is:
\begin{equation}
    \mathcal{L}(\theta) = -\log \sigma\left( \hat{r}_w - \hat{r}_l \right),
\end{equation}
where the implicit reward is
\begin{equation}
    \hat{r}_y = \beta \log \frac{\pi_{\theta}(y \mid I_{\text{orig}}, P)}{\pi_{\text{ref}}(y \mid I_{\text{orig}}, P)}.
\end{equation}
Applying the chain rule:
\begin{equation}
    \nabla_\theta \mathcal{L}(\theta) =
    -\frac{\partial \log \sigma(z)}{\partial z}\bigg|_{z=\hat{r}_w - \hat{r}_l}\cdot \nabla_\theta(\hat{r}_w - \hat{r}_l).
\end{equation}
Using \(\frac{d \log \sigma(z)}{dz} = 1 - \sigma(z) = \sigma(-z)\) and noting \(\pi_{\text{ref}}\) is fixed, we obtain:
\begin{equation}
    \nabla_\theta \mathcal{L}(\theta)
    = - \sigma(\hat{r}_l - \hat{r}_w)\cdot \beta
    \left(
    \nabla_\theta \log \pi_\theta(y^{w} \mid I_{\text{orig}}, P)
    -
    \nabla_\theta \log \pi_\theta(y^{l} \mid I_{\text{orig}}, P)
    \right).
\end{equation}
Let \(A = \beta \cdot \sigma(\hat{r}_l - \hat{r}_w) > 0\). Projecting onto the \(\theta_1\) subspace yields:
\begin{equation}
    \nabla_{\theta_1} \mathcal{L}(\theta)
    = - A \cdot
    \left[
    \nabla_{\theta_1} \log \pi_\theta(y^{w} \mid I_{\text{orig}}, P)
    -
    \nabla_{\theta_1} \log \pi_\theta(y^{l} \mid I_{\text{orig}}, P)
    \right].
\end{equation}
Define the \(\theta_1\)-gradient difference term
\begin{equation}
    \Delta(\theta_1)
    \triangleq
    \nabla_{\theta_1} \log \pi_\theta(y^{w} \mid I_{\text{orig}}, P)
    -
    \nabla_{\theta_1} \log \pi_\theta(y^{l} \mid I_{\text{orig}}, P).
\end{equation}
Thus, the effectiveness of a preference strategy hinges on its ability to increase the magnitude of \(\Delta(\theta_1)\) (in expectation over preference pairs).

\subsection{Information-Theoretic Interpretation of Gradient Dynamics}

\begin{definition}[Visual Information Dependency, VID]
Let \((I,P)\sim \mathcal{D}\) be an image--prompt pair drawn from the data distribution, \(F_v=g_\theta(I)\) be the fused visual representation, and \(Y\sim \pi_\theta(\cdot\mid I,P)\) be the model output. We define the Visual Information Dependency (VID) under parameters \(\theta\) as the conditional mutual information
\begin{equation}
    \mathrm{VID}(\theta)\triangleq I\!\left(Y;F_v \mid P,\theta\right),
    \qquad \text{where } (I,P)\sim \mathcal{D},\ F_v=g_\theta(I),\ Y\sim \pi_\theta(\cdot\mid I,P).
\end{equation}
Equivalently,
\begin{equation}
    \mathrm{VID}(\theta)
    =
    \mathbb{E}\!\left[
    \log \frac{p_\theta(Y\mid F_v,P)}{p_\theta(Y\mid P)}
    \right]
\end{equation}
under the induced joint distribution of \((Y,F_v,P)\).
\end{definition}

\begin{proposition}[\textbf{Assumption:} Gradient--VID association]\label{prop:grad-vid}
Under the induced joint distribution of \((Y,F_v,P)\), the expected norm of the vision-parameter log-likelihood gradient is positively associated with VID:
\begin{equation}
    \mathbb{E}\!\left[\left\|\nabla_{\theta_1}\log \pi_\theta\!\left(Y\mid I,P\right)\right\|\right]
    \ \text{tends to increase with}\ 
    \mathrm{VID}(\theta)=I(Y;F_v\mid P,\theta).
\end{equation}
\end{proposition}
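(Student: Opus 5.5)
Since the statement is explicitly labelled an assumption, the plan is not to prove it unconditionally. Instead I would derive a \emph{lower bound} of the form $\mathbb{E}\|\nabla_{\theta_1}\log\pi_\theta(Y\mid I,P)\|\ \ge\ c\,\mathrm{VID}(\theta)/D - \epsilon$ under explicit regularity hypotheses. Here $c$, $D$ and $\epsilon$ are constants that do not depend on the data strategy, so ``tends to increase with'' is read as monotone control through this bound.

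\textbf{Step 1 (chain rule through the visual pathway).} Write $\log\pi_\theta(Y\mid I,P)=\log p_\theta(Y\mid F_v,P)$ with $F_v=g_\theta(I)$. Then
\[
\nabla_{\theta_1}\log\pi_\theta = J_{\theta_1}^\top \nabla_{F}\log p_\theta(Y\mid F_v,P) + R,
\qquad J_{\theta_1}=\partial F_v/\partial\theta_1,
\]
where $R$ collects the direct dependence of the language head on $\theta_1$. The definition of $\theta_1$ as vision-dominant in Appendix~\ref{sec:appendix_theory} justifies assuming $\mathbb{E}\|R\|\le\epsilon$. I would also assume $J_{\theta_1}$ has smallest singular value at least $c>0$ on the image support. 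This gives
\[
\mathbb{E}\|\nabla_{\theta_1}\log\pi_\theta\|\ \ge\ c\,\mathbb{E}\|\nabla_F\log p_\theta(Y\mid F_v,P)\|-\epsilon .
\]
This reduces the claim to showing that the expected \emph{feature-score} norm controls VID.

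\textbf{Step 2 (VID bounded by feature score).} Use the marginalisation $p_\theta(Y\mid P)=\mathbb{E}_{F'}[p_\theta(Y\mid F',P)]$, where $F'$ is an independent copy drawn given $P$. By Jensen applied to $-\log$,
\[
\mathrm{VID}=\mathbb{E}\Big[\log\tfrac{p_\theta(Y\mid F_v,P)}{p_\theta(Y\mid P)}\Big]\le \mathbb{E}_{Y,F_v,F'}\big[\log p_\theta(Y\mid F_v,P)-\log p_\theta(Y\mid F',P)\big].
\]
The mean value theorem bounds each difference by $\|F_v-F'\|$ times the score norm at some point on the segment $[F',F_v]$. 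With $D$ the diameter of the feature support, this yields
\[
\mathrm{VID}\le D\,\mathbb{E}\big[\sup_{\text{segment}}\|\nabla_F\log p_\theta(Y\mid F,P)\|\big].
\]

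\textbf{Step 3 (main obstacle): from the segment supremum back to the pointwise score.} Step 2 controls VID by a supremum along segments, whereas Step 1 needs the score evaluated at the actual $F_v$. My first attempt would be a bounded-Hessian assumption, $\|\nabla_F^2\log p_\theta\|\le H$. Then the supremum exceeds the score at $F_v$ by at most $HD$. The subtlety is that this penalty is additive and may swamp small VID values. If it does, I would fall back to a multiplicative comparability assumption on the support: score norms along any segment are within a factor $\kappa$ of the score at its endpoint. This gives $\mathbb{E}\|\nabla_F\log p\|\ge \mathrm{VID}/(\kappa D)$.

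Combining the three steps gives
\[
\mathbb{E}\|\nabla_{\theta_1}\log\pi_\theta\|\ \ge\ \frac{c}{\kappa D}\,\mathrm{VID}(\theta)-\epsilon,
\]
which is the positive association asserted in Proposition~\ref{prop:grad-vid}. As a sanity check I would note the converse direction: if VID $=0$, then $p(Y\mid F,P)$ is constant in $F$ almost surely, so the feature score vanishes and only $\epsilon$ remains. The association is therefore tight at zero. I would state clearly that the result is heuristic outside these hypotheses.
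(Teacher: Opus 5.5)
The paper gives no proof of this statement. It is posited as an assumption, motivated only by an appeal to Information-Bottleneck intuition, and then used as a black box in Theorem~\ref{thm:psc-vcpg}. Your proposal takes a genuinely different route: it turns the heuristic into a conditional inequality, and under your hypotheses the chain is sound. Step~1 is the chain rule plus the triangle inequality. In Step~2 the Jensen step is the standard (CLUB-type) upper bound $\mathrm{VID}\le\mathbb{E}_{F_v,F'}\big[D_{KL}\big(p_\theta(\cdot\mid F_v,P)\parallel p_\theta(\cdot\mid F',P)\big)\big]$. It is legitimate because $Y$'s conditional law given $F_v$ is exactly the density you differentiate. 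The mean value theorem and the comparability constant $\kappa$ then finish the argument. What your route buys is an explicit list of conditions under which a large VID forces a large vision-parameter gradient: a negligible direct path $R$, a singular-value floor for $J_{\theta_1}$, bounded feature support, and local-to-global comparability of feature scores. The paper's bare assumption instead asserts the two-sided association it needs downstream, but without any justification.

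Do not, however, present your floor as ``the positive association asserted.'' A bound $\mathbb{E}\|\nabla_{\theta_1}\log\pi_\theta\|\ge\frac{c}{\kappa D}\mathrm{VID}-\epsilon$ still allows large gradients at small VID. Theorem~\ref{thm:psc-vcpg} uses precisely the other direction: small VID for $Y^l$ implies small $\mathbb{E}\|g(Y^l)\|$, and nearly equal VIDs under PSC imply nearly equal norms. Moreover, your Step~2 does not apply to edited responses, or to $Y^l$ generated under $I^{-}$ but scored under $I_{\text{orig}}$.

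Your converse sanity check does not close this gap. $\mathrm{VID}=0$ only forces $p_\theta(Y\mid F,P)$ to be constant on the support of $F_v\mid P$, not on a neighbourhood. For an empirical dataset like RLHF-V that support is a finite set, often a single point, and at best it is a low-dimensional image manifold. So $\nabla_F\log p_\theta(Y\mid F_v,P)$, contracted against the parameter directions in $J_{\theta_1}$, need not vanish.

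Nor does the floor give ``monotone control.'' For fixed data, both sides of the statement are functions of $\theta$, and $c,\kappa,D,\epsilon$ move with $\theta$, possibly against VID. For instance, take $F\sim U[0,1]$ and $p(Y{=}1\mid F)=\sigma\big(a(F-\tfrac12)\big)$. Increasing $a$ raises $\mathrm{VID}$ to $\log 2$ and raises $\mathbb{E}|\partial_F\log p|=2\tanh(a/4)$ to $2$. Yet the uniform $\kappa$ must be at least $e^{a/2}$, so your Step~3 floor $\mathrm{VID}/(\kappa D)$ tends to $0$. State the result as a conditional lower bound, and add separate hypotheses if you want the reverse direction.
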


\paragraph{Setting of the two strategies (basic cases).}
We define the two preference construction strategies as follows.
\begin{itemize}[nosep,leftmargin=*]
\item \textbf{PSC:} \(Y^{l}\sim \pi_\theta(\cdot \mid I_{\text{orig}}, P)\) is generated from the original image and prompt; \(Y^{w} = \mathrm{Edit}(Y^{l})\) is obtained by post-hoc editing without access to additional visual evidence.
\item \textbf{VCPG:} There exist an enhanced image \(I^{+}\) and a degraded image \(I^{-}\) such that \(Y^{w}\sim \pi_\theta(\cdot \mid I^{+}, P)\) and \(Y^{l}\sim \pi_\theta(\cdot \mid I^{-}, P)\). We assume a degradation channel \(I^{-} = \mathcal{D}(I^{+})\) (noise/lossy transform), and \(I_{\text{orig}}\) is equal or close to \(I^{+}\) (the training image). When optimizing DPO, we form the training tuple \((I_{\text{orig}},P,Y^w,Y^l)\) and score \emph{both} responses under the same context \((I_{\text{orig}},P)\) in the DPO loss.
\end{itemize}

\begin{lemma}[DPI-based VID ordering]\label{lem:dpi}
\hfill
\begin{enumerate}[label=(\alph*), nosep, leftmargin=*]
\item (\textbf{PSC}) Let \(F_v=g_\theta(I_{\text{orig}})\). Assume the edit operator is conditionally independent of \(F_v\) given \(Y^{l}\), i.e., \(F_v \to Y^{l} \to Y^{w}\) forms a Markov chain under \((P,\theta)\). Then by the data processing inequality (DPI),
\[
I(Y^{w}; F_v \mid P,\theta) \;\le\; I(Y^{l}; F_v \mid P,\theta).
\]
If the edit is small (token-level perturbation), we assume the VID reduction is bounded:
\[
I(Y^{l}; F_v \mid P,\theta) - I(Y^{w}; F_v \mid P,\theta) \le \varepsilon_{\mathrm{VID}}
\]
for a small \(\varepsilon_{\mathrm{VID}}>0\).
\item (\textbf{VCPG}) Let \(I^{-}=\mathcal{D}(I^{+})\) be a degradation channel and \(F_v^{\pm}=g_\theta(I^{\pm})\). Let \(Y^{+}\sim \pi_\theta(\cdot\mid I^{+},P)\) and \(Y^{-}\sim \pi_\theta(\cdot\mid I^{-},P)\).
We assume degradation reduces visual dependence \emph{in expectation}:
\[
I(Y^{+}; F_v^{+} \mid P,\theta) \;\ge\; I(Y^{-}; F_v^{-} \mid P,\theta).
\]
If \(I_{\text{orig}}=I^{+}\) (or is sufficiently close), evaluating the quantities against \(I_{\text{orig}}\) preserves this ordering up to a small mismatch \(\delta_{\mathrm{img}}\).
\end{enumerate}
\end{lemma}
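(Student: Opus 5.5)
Part (a) is a direct application of the data processing inequality, conditioned on $(P,\theta)$. First I make the Markov structure precise. Conditioned on $(P,\theta)$, $Y^l$ is drawn from $\pi_\theta(\cdot\mid I_{\text{orig}},P)$, which depends on the image only through $F_v=g_\theta(I_{\text{orig}})$. Then $Y^w=\mathrm{Edit}(Y^l)$ is produced by a (possibly randomized) kernel whose randomness is independent of $F_v$ given $Y^l$. This gives the chain $F_v\to Y^l\to Y^w$. Expanding $I(F_v;Y^l,Y^w\mid P,\theta)$ by the chain rule in two orders yields
\[
I(F_v;Y^w\mid P,\theta)+I(F_v;Y^l\mid Y^w,P,\theta)=I(F_v;Y^l\mid P,\theta)+I(F_v;Y^w\mid Y^l,P,\theta).
\]
The last term vanishes by the Markov assumption, and mutual information is nonnegative, so $I(Y^w;F_v\mid P,\theta)\le I(Y^l;F_v\mid P,\theta)$. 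The identity also shows that the gap equals $I(F_v;Y^l\mid Y^w,P,\theta)$, the visual information lost by editing.

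The second claim in (a) is labeled an assumption, and I would treat it as one. To make it plausible, I would bound the gap by $I(F_v;Y^l\mid Y^w,P,\theta)\le H(Y^l\mid Y^w,P,\theta)$. For a token-level edit touching at most $k$ positions out of a vocabulary $\mathcal{V}$, this entropy is at most $k\log(|\mathcal{V}|\cdot|Y^l|)$, which supplies a concrete $\varepsilon_{\mathrm{VID}}$ when $k$ is small.

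Part (b) is again stated as an expectation-level assumption, so there is nothing to derive beyond justifying it heuristically and controlling the mismatch term. For the heuristic, let $I^-=\mathcal{D}(I^+)$. Then $F_v^+\to I^+\to I^-\to F_v^-\to Y^-$ is Markov, so DPI gives $I(Y^-;F_v^+\mid P,\theta)\le I(Y^-;F_v^-\mid P,\theta)$ and $I(Y^-;I^+\mid P,\theta)\le I(Y^-;I^-\mid P,\theta)$. Note that this is not by itself the stated ordering, which compares $Y^+$ against $F_v^+$ and $Y^-$ against $F_v^-$. I would therefore explicitly retain it as an assumption, supported by the observation that a lossy channel reduces $I(I^-;I^+)$, so responses generated from $I^-$ carry less visual information about the true scene.

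For the mismatch term, I would assume $g_\theta$ and $\pi_\theta$ are Lipschitz in the image, which makes the conditional laws of $Y$ given $F_v$ continuous. Continuity of mutual information in total variation (the Fannes-type bound for finite output alphabets, valid for finite-length responses) then gives
\[
|I(Y;g_\theta(I_{\text{orig}})\mid P)-I(Y;g_\theta(I^+)\mid P)|\le\delta_{\mathrm{img}},
\]
with $\delta_{\mathrm{img}}\to0$ as $\|I_{\text{orig}}-I^+\|\to0$. This preserves the ordering up to $\delta_{\mathrm{img}}$.

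The main obstacle is this last continuity step. Mutual information is only uniformly continuous over bounded alphabets, so I would first truncate the response length to make the output space finite, then accept a $\log|\mathcal{Y}|$-dependent constant in $\delta_{\mathrm{img}}$.
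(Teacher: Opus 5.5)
Your proposal follows the paper: the only derived claim in the lemma is the inequality in (a), which the paper gets the same way by invoking the data processing inequality on the assumed chain $F_v\to Y^l\to Y^w$ (your chain-rule expansion is the standard proof of DPI). The bound $\varepsilon_{\mathrm{VID}}$, the ordering in (b), and the $\delta_{\mathrm{img}}$ mismatch are asserted there without derivation, so your entropy bound and continuity argument are extra heuristics rather than required steps; if you keep the continuity sketch, run it through $H(Y)-\mathbb{E}\,H(\pi_\theta(\cdot\mid F_v))$ (Fannes on the finite output alphabet, plus a coupling with $\|g_\theta(I_{\text{orig}})-g_\theta(I^+)\|$ small), since norm-closeness of the features does not by itself make the joint laws of $(Y,F_v)$ close in total variation.
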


\begin{lemma}[Local smoothness of gradient field]\label{lem:lipschitz}
Assume \(\nabla_{\theta_1}\log \pi_\theta(y \mid I_{\text{orig}},P)\) is \(L\)-Lipschitz in \(y\) under a task-appropriate distance \(d(\cdot,\cdot)\):
\[
\big\|\nabla_{\theta_1}\log \pi_\theta(y_1 \mid I_{\text{orig}},P) - \nabla_{\theta_1}\log \pi_\theta(y_2 \mid I_{\text{orig}},P)\big\|
\le L\, d(y_1,y_2).
\]
\end{lemma}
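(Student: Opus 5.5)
The statement is posed as a regularity assumption, so the plan is to show that it holds under mild, standard conditions rather than to derive it from the earlier results. The first step is to use the autoregressive factorization
\[
\log \pi_\theta(y\mid I_{\text{orig}},P)=\sum_{t=1}^{|y|}\log \pi_\theta(y_t\mid y_{<t},I_{\text{orig}},P).
\]
This gives
\[
\nabla_{\theta_1}\log \pi_\theta(y\mid I_{\text{orig}},P)=\sum_{t}g_t(y_{\le t}),
\qquad g_t(y_{\le t})\triangleq\nabla_{\theta_1}\log \pi_\theta(y_t\mid y_{<t},I_{\text{orig}},P).
\]

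The second step is a uniform per-token bound $\|g_t\|\le G$. We work with a bounded parameter region, as in a neighbourhood of $\pi_{\text{ref}}$, which is exactly the regime enforced by DPO's KL constraint. We also assume the network has Lipschitz activations and normalization layers with bounded Jacobians. Write $z_t$ for the logits at step $t$. Then
\[
\nabla_{\theta_1}\log\operatorname{softmax}(z_t)_{y_t}=(e_{y_t}-p_t)^\top \partial z_t/\partial\theta_1 ,
\]
where $e_{y_t}$ is the one-hot vector of the token $y_t$ and $p_t=\operatorname{softmax}(z_t)$. Since $\|e_{y_t}-p_t\|\le\sqrt2$, this gives $G=\sqrt2\,\sup\|\partial z_t/\partial\theta_1\|$. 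The supremum is finite because the vocabulary and the maximum length $T$ are finite and the parameter region is compact.

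The third step fixes a task-appropriate distance. For two responses let $t^\ast$ be the first position where they differ. Every term with $t<t^\ast$ depends only on the shared prefix, so these terms cancel in the difference. The surviving terms number at most
\[
d(y_1,y_2)\triangleq \max(|y_1|,|y_2|)-t^\ast+1 ,
\]
with $d=0$ when $y_1=y_2$. This suffix-divergence length is a genuine metric on sequences: it is an ultrametric, defined via the longest common prefix. The triangle inequality then gives
\[
\|\nabla_{\theta_1}\log\pi_\theta(y_1\mid\cdot)-\nabla_{\theta_1}\log\pi_\theta(y_2\mid\cdot)\|\le 2G\,d(y_1,y_2),
\]
so the statement holds with $L=2G$.

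If one prefers a Hamming or edit distance, the simplest route is that any map on a finite metric space is Lipschitz with constant $2GT/d_{\min}$. Here $d_{\min}\ge1$ is the smallest distance between distinct sequences. This argument is valid but gives a loose constant.

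The main obstacle is making $L$ \emph{small} for token-level edits under Hamming distance, which is what Lemma~\ref{lem:dpi}(a) implicitly needs. Changing one token perturbs the conditioning of all later positions, so the naive bound grows with the length of the suffix. To handle this, I would add a decay assumption on the sensitivity of $g_t$ to distant prefix tokens. Concretely, I would require
\[
\|g_t(\dots,a,\dots)-g_t(\dots,b,\dots)\|\le C\rho^{\,t-s}\quad\text{for a change at position } s,\ \rho<1 ,
\]
which reflects attention locality. Summing the geometric series then gives a length-independent constant $L=2G+C/(1-\rho)$ per edited token. The decay assumption is itself an assumption rather than something derived, which is why the statement is best read as a hypothesis that is well justified, not a theorem.
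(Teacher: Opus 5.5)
The paper gives no argument for this statement. Lemma~\ref{lem:lipschitz} is a bare hypothesis with $d$ left unspecified, and its only consumer is part (i) of Theorem~\ref{thm:psc-vcpg}, where it yields $\|\Delta^{(\mathrm{PSC})}(\theta_1)\|\le L\varepsilon$ for small edits. Lemma~\ref{lem:dpi}(a), which you cite as the consumer, concerns only the VID gap. You correctly read the statement as an assumption and supply sufficient conditions for it, and your computations are sound:
\begin{itemize}
\item the score bound via $\|e_{y_t}-p_t\|\le\sqrt2$;
\item cancellation of the shared-prefix terms, giving $L=2G$ for the suffix-divergence distance;
\item the crude finite-space constant $2GT/d_{\min}$;
\item the constant $2G+C/(1-\rho)$ per substituted token under your decay hypothesis.
\end{itemize}

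What your route buys is a diagnosis the paper's phrasing conceals. With finite vocabulary and bounded length, the Lipschitz property holds for every metric, so the lemma has no content beyond the size of $L$. The real requirement is that $L\varepsilon$ be small for PSC-type edits. That fails for your prefix-based metric, since an early edit makes $d$ close to $|y|$. Under Hamming distance it needs an extra locality condition like your $C\rho^{t-s}$ decay.

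That decay condition is a strong hypothesis for global attention. It is plausibly weakest exactly for hallucination corrections such as replacing an object noun, after which every later conditional shifts. As stated, it also covers only substitutions, not the length-changing insertions and deletions typical of post-hoc revisions, unless you reformulate it for aligned, shifted suffixes. The paper's bare assumption buys brevity and the freedom to choose $d$ after the fact, at the cost of hiding that this locality property is where the actual content lies.

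Two small corrections.
\begin{itemize}
\item Writing $\mathrm{lcp}$ for the length of the longest common prefix, your distance is $d(y_1,y_2)=\max(|y_1|,|y_2|)-\mathrm{lcp}(y_1,y_2)$. It is a metric but not an ultrametric: for $x=(a)$, $y=(b)$, $z=(b,b,b,b,b)$ one gets $d(x,y)=1$, $d(y,z)=4$, $d(x,z)=5$. The triangle inequality does hold, from $\mathrm{lcp}(x,z)\ge\min\{\mathrm{lcp}(x,y),\mathrm{lcp}(y,z)\}$ together with $\mathrm{lcp}(y,\cdot)\le|y|$. In any case the Lipschitz inequality itself does not need $d$ to be a metric.
\item Compactness of the parameter region is unnecessary. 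The lemma is about a fixed $\theta$, so $G$ is a maximum over finitely many prefix--token pairs. Moreover, DPO's KL term is a soft penalty on output distributions and does not confine $\theta$ to a compact set. Invoke bounded parameters only if you want $L$ to be uniform along training.
\end{itemize}
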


\begin{theorem}[Gradient-difference behavior under PSC vs.\ VCPG]\label{thm:psc-vcpg}
Let \(g(y)\triangleq\nabla_{\theta_1}\log \pi_\theta\!\left(y \mid I_{\text{orig}},P\right)\) and define the random gradient-difference
\[
\Delta^{(\cdot)}(\theta_1)\triangleq g(Y^{w})-g(Y^{l}),
\]
where \((Y^w,Y^l)\) are constructed by the corresponding strategy and then both are scored under \((I_{\text{orig}},P)\) in \(g(\cdot)\). Under the basic cases specified above:
\begin{enumerate}[label=(\roman*), nosep, leftmargin=*]
\item \textbf{PSC:} If \(Y^{w}=\mathrm{Edit}(Y^{l})\) with small edit distance \(d(Y^{w},Y^{l})\le \varepsilon\), then by Lemma~\ref{lem:lipschitz},
\begin{equation}\label{eq:psc-small}
\|\Delta^{(\mathrm{PSC})}(\theta_1)\|
= \|g(Y^{w})-g(Y^{l})\|
\le L\,\varepsilon.
\end{equation}
Moreover, Lemma~\ref{lem:dpi}(a) implies \(I(Y^{w};F_v\mid P,\theta)\) is close to \(I(Y^{l};F_v\mid P,\theta)\) up to \(\varepsilon_{\mathrm{VID}}\), and by Assumption~\ref{prop:grad-vid}, the corresponding expected gradient norms are close (up to a constant factor). Hence PSC yields a small gradient-difference in \eqref{eq:psc-small}.
\item \textbf{VCPG:} Let \(I^{+}\) be enhanced and \(I^{-}\) degraded with \(I^{-}=\mathcal{D}(I^{+})\), and assume \(I_{\text{orig}}=I^{+}\) (or the mismatch is bounded by \(\delta_{\mathrm{img}}\)). Then by Lemma~\ref{lem:dpi}(b) and Assumption~\ref{prop:grad-vid}, there exist \(\alpha>0\) and small \(\varepsilon\ge 0\) such that
\begin{equation}\label{eq:vid-gap}
\mathbb{E}\!\left[\|g(Y^{w})\|\right] \;\ge\; \alpha,\qquad 
\mathbb{E}\!\left[\|g(Y^{l})\|\right] \;\le\; \varepsilon,
\end{equation}
where \(\alpha\) tends to increase as the preferred output becomes more visually dependent, and \(\varepsilon\) tends to decrease as the degradation strengthens. Therefore, by the triangle inequality,
\begin{equation}\label{eq:vcpg-lb}
\mathbb{E}\!\left[\|\Delta^{(\mathrm{VCPG})}(\theta_1)\|\right]
=
\mathbb{E}\!\left[\|g(Y^{w})-g(Y^{l})\|\right]
\ge
\mathbb{E}\!\left[\|g(Y^{w})\|\right]
-
\mathbb{E}\!\left[\|g(Y^{l})\|\right]
\ge \alpha-\varepsilon.
\end{equation}
In the limiting case \(\varepsilon\to 0\), we have \(\mathbb{E}\|\Delta^{(\mathrm{VCPG})}(\theta_1)\|\to \mathbb{E}\|g(Y^{w})\|\), i.e., the gradient-difference magnitude is \emph{significantly amplified} as the degraded sample becomes visually uninformative.
\end{enumerate}
\end{theorem}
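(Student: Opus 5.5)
The proof splits into the two regimes of Theorem~\ref{thm:psc-vcpg}. Both parts rest on the same object: the random gradient field $g(y)=\nabla_{\theta_1}\log\pi_\theta(y\mid I_{\text{orig}},P)$, always evaluated in the fixed scoring context $(I_{\text{orig}},P)$. This lets Lemma~\ref{lem:lipschitz} and Assumption~\ref{prop:grad-vid} be applied pathwise and then in expectation. The theorem is largely a chaining of the stated assumptions, so the task is to make explicit which assumption enters at which step.

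For part (i), fix a realization $(Y^w,Y^l)$ with $Y^w=\mathrm{Edit}(Y^l)$ and $d(Y^w,Y^l)\le\varepsilon$. Applying Lemma~\ref{lem:lipschitz} with $y_1=Y^w$ and $y_2=Y^l$ gives $\|g(Y^w)-g(Y^l)\|\le L\,d(Y^w,Y^l)\le L\varepsilon$ almost surely. Taking expectations yields $\mathbb{E}\|\Delta^{(\mathrm{PSC})}(\theta_1)\|\le L\varepsilon$.

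The information-theoretic remark is corroborative rather than needed for the bound. I would derive it from Lemma~\ref{lem:dpi}(a): the Markov chain $F_v\to Y^l\to Y^w$ gives $0\le I(Y^l;F_v\mid P,\theta)-I(Y^w;F_v\mid P,\theta)\le\varepsilon_{\mathrm{VID}}$. Assumption~\ref{prop:grad-vid} then says the expected norms $\mathbb{E}\|g(Y^w)\|$ and $\mathbb{E}\|g(Y^l)\|$ differ by at most a constant times $\varepsilon_{\mathrm{VID}}$. This is consistent with, but weaker than, the Lipschitz bound. Closeness of norms alone does not bound the norm of the difference, so the rigorous content of (i) rests on the Lipschitz step.

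For part (ii), first set $I_{\text{orig}}=I^+$. Lemma~\ref{lem:dpi}(b) gives the ordering $I(Y^+;F_v^+\mid P,\theta)\ge I(Y^-;F_v^-\mid P,\theta)$. When this is transferred to the scoring context $I_{\text{orig}}$, the ordering incurs a mismatch of at most $\delta_{\mathrm{img}}$. I would then turn Assumption~\ref{prop:grad-vid} into a quantitative form: posit a nondecreasing function $\phi$ with $\mathbb{E}\|g(Y)\|\approx\phi(\mathrm{VID})$. With that, define $\alpha:=\phi(\mathrm{VID}^+)$ and $\varepsilon:=\phi(\mathrm{VID}^-+\delta_{\mathrm{img}})$. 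The stated monotonicity properties of $\alpha$ and $\varepsilon$ then follow from monotonicity of $\phi$.

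Next comes the reverse triangle inequality $\|a-b\|\ge\|a\|-\|b\|$, applied pointwise with $a=g(Y^w)$ and $b=g(Y^l)$. Taking expectations and using linearity gives $\mathbb{E}\|\Delta^{(\mathrm{VCPG})}\|\ge\mathbb{E}\|g(Y^w)\|-\mathbb{E}\|g(Y^l)\|\ge\alpha-\varepsilon$. For the limiting case, combine this lower bound with the ordinary triangle inequality upper bound $\mathbb{E}\|\Delta\|\le\mathbb{E}\|g(Y^w)\|+\mathbb{E}\|g(Y^l)\|$. As $\varepsilon\to0$, the two bounds squeeze $\mathbb{E}\|\Delta\|$ to $\mathbb{E}\|g(Y^w)\|$.

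The main obstacle is the step that produces \eqref{eq:vid-gap}. Assumption~\ref{prop:grad-vid} is only an association, not a quantitative bound, so the existence of $\alpha$ and $\varepsilon$ requires the monotone-link function $\phi$ introduced above. There is also a subtlety: the VID of $Y^-$ is measured relative to $F_v^-$, while $g$ scores $Y^-$ under $I_{\text{orig}}$. I would handle this by assuming $\phi$ is Lipschitz and charging the context switch to $\delta_{\mathrm{img}}$. I will state this explicitly as the modelling hypothesis under which (ii) holds, rather than claim it follows from the earlier lemmas alone.
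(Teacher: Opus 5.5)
Your proposal is correct and follows essentially the same route as the paper's proof. Part (i) rests on applying the Lipschitz lemma pathwise, with the DPI/VID comparison serving as support rather than carrying the bound, and part (ii) chains Lemma~\ref{lem:dpi}(b) with the Gradient--VID assumption to obtain the two expectation bounds before applying the reverse triangle inequality. Your monotone link $\phi$ and the matching upper bound $\mathbb{E}\|\Delta\|\le \mathbb{E}\|g(Y^w)\|+\mathbb{E}\|g(Y^l)\|$ for the limiting case simply make explicit the modelling step and the squeeze that the paper's proof leaves implicit.
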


\begin{proof}
(i) For PSC, the Lipschitz smoothness in Lemma~\ref{lem:lipschitz} directly gives \eqref{eq:psc-small} when the edit distance is small. The DPI statement in Lemma~\ref{lem:dpi}(a) formalizes that post-hoc editing without extra visual evidence cannot increase visual dependence; with small edits, the VID gap is bounded by \(\varepsilon_{\mathrm{VID}}\). Assumption~\ref{prop:grad-vid} then connects this bounded VID gap to a bounded gap in expected vision-gradient norms.

(ii) For VCPG, Lemma~\ref{lem:dpi}(b) states that degradation reduces visual dependence in expectation. Combined with Assumption~\ref{prop:grad-vid}, this induces a gap between \(\mathbb{E}\|g(Y^w)\|\) and \(\mathbb{E}\|g(Y^l)\|\) as in \eqref{eq:vid-gap}. Applying the triangle inequality yields the expectation-level lower bound \eqref{eq:vcpg-lb}.
\end{proof}

\paragraph{Consequences for DPO updates.}
Combining Theorem~\ref{thm:psc-vcpg} with the DPO gradient on \(\theta_1\),
\(
\nabla_{\theta_1}\mathcal{L}(\theta) = -A\cdot \Delta(\theta_1),
\)
we obtain: PSC produces a small update due to \eqref{eq:psc-small}; VCPG yields a larger-magnitude update due to \eqref{eq:vcpg-lb}, thereby \emph{increasing} \(\|\Delta(\theta_1)\|\) (in expectation) compared to PSC.

\subsection{Linking Statistical Properties of Preference Pairs to the FIM}

We now connect the above analysis to the Fisher Information Matrix (FIM). For a preference observation \((y^{w} \succ y^{l})\) under \((I_{\text{orig}},P)\), define the DPO preference likelihood as
\begin{equation}
\mathcal{P}_\theta(y^{w} \succ y^{l} \mid I_{\text{orig}},P) \triangleq \sigma(\hat{r}_w-\hat{r}_l).
\end{equation}
The score function restricted to \(\theta_1\) is
\begin{equation}
s_1(\theta) \triangleq \nabla_{\theta_1}\log \mathcal{P}_\theta(y^{w} \succ y^{l} \mid I_{\text{orig}},P).
\end{equation}
We are interested in the trace of \(F_{\theta_1}\). From the DPO gradient derivation, we have \(s_1(\theta)=A\cdot \Delta(\theta_1)\), where \(A=\beta\,\sigma(\hat{r}_l-\hat{r}_w)\). Therefore,
\begin{equation}
    \mathrm{Tr}(F_{\theta_1})
    = \mathbb{E}_{\mathcal{D}}\!\left[\|s_1(\theta)\|^2\right]
    = \mathbb{E}_{\mathcal{D}}\!\left[A^2\,\|\Delta(\theta_1)\|^2\right].
\end{equation}

To relate \(\mathrm{Tr}(F_{\theta_1})\) to the statistical structure of a preference pair, we consider conditional distributions over the visual representation space induced by the model and data, denoted as \(p_\theta(F_v \mid y, P)\).

\begin{proposition}[Information-geometric interpretation (heuristic)]\label{prop:fim-kl}
Within the DPO framework, \(\mathrm{Tr}(F_{\theta_1})\) tends to increase when the preference pair induces a larger separation between the representation posteriors:
\begin{equation}
    \mathrm{Tr}(F_{\theta_1})
    \ \text{tends to increase with}\
    \mathbb{E}_{\mathcal{D}}\!\left[D_{KL}\!\big(p_\theta(F_v\mid y^{w},P)\ \parallel\ p_\theta(F_v\mid y^{l},P)\big)\right].
\end{equation}
\end{proposition}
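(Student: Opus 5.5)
We will argue at the level of rigor the statement itself carries: a monotone association derived under explicit local assumptions, not a sharp inequality. The starting point is the identity just derived,
\[
\mathrm{Tr}(F_{\theta_1})=\mathbb{E}_{\mathcal{D}}\!\left[A^2\|\Delta(\theta_1)\|^2\right],\qquad A=\beta\,\sigma(\hat{r}_l-\hat{r}_w).
\]
We first isolate the pair-dependent factor. The filtering step of the method keeps the log-probability margin in a bounded window $[\theta_{\text{low}},\theta_{\text{high}}]$. Near initialization ($\pi_\theta\approx\pi_{\text{ref}}$) we have $\hat{r}_w\approx\hat{r}_l\approx 0$, so $A\approx\beta/2$. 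We therefore treat $A^2$ as bounded below by a positive constant $c_A$. This reduces the claim to showing that $\mathbb{E}\|\Delta(\theta_1)\|^2$ grows with the expected posterior KL. By Jensen, $\mathbb{E}\|\Delta\|^2\ge(\mathbb{E}\|\Delta\|)^2$, so it suffices to control $\mathbb{E}\|\Delta(\theta_1)\|$, which is exactly the quantity handled in Theorem~\ref{thm:psc-vcpg}.

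The second step links $\|\Delta(\theta_1)\|$ to the separation of the representation posteriors. The only route from $\theta_1$ to the output runs through $F_v=g_\theta(I)$, so we write
\[
\nabla_{\theta_1}\log\pi_\theta(y\mid I,P)=J_{\theta_1}(F_v)^{\top}\nabla_{F_v}\log p_\theta(y\mid F_v,P).
\]
Bayes' rule gives
\[
\log p_\theta(y\mid F_v,P)=\log p_\theta(F_v\mid y,P)+\log p_\theta(y\mid P)-\log p_\theta(F_v\mid P),
\]
and only the first term depends on both $y$ and $F_v$. Hence
\[
\Delta(\theta_1)=J^{\top}\Big(\nabla_{F_v}\log p_\theta(F_v\mid y^w,P)-\nabla_{F_v}\log p_\theta(F_v\mid y^l,P)\Big),
\]
which is a difference of representation-space score functions.

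Next we take expectation over $F_v\sim p_\theta(F_v\mid y^w,P)$. The expected squared difference of score functions is the relative Fisher information $\mathcal{J}(p_w\|p_l)$. If $p_l$ satisfies a log-Sobolev inequality with constant $\kappa$, then $D_{KL}(p_w\|p_l)\le \frac{1}{2\kappa}\,\mathcal{J}(p_w\|p_l)$. We also assume the Jacobian is nondegenerate on the relevant directions, $\sigma_{\min}(JJ^{\top})\ge\gamma>0$; this is a direction in which the definition of $\theta_1$ as vision-dominant gives some justification. Combining the two yields
\[
\mathrm{Tr}(F_{\theta_1})\ \ge\ 2\kappa\gamma\,c_A\,\mathbb{E}_{\mathcal{D}}\!\left[D_{KL}\big(p_\theta(F_v\mid y^w,P)\,\|\,p_\theta(F_v\mid y^l,P)\big)\right].
\]
This is a lower bound that increases with the expected KL, which is the sense of ``tends to increase'' that we will claim.

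The main obstacle is the change of conditioning between the two sides. The Fisher trace evaluates $\Delta$ at the single training representation $F_v=g_\theta(I_{\text{orig}})$. The relative Fisher information instead averages over $F_v\sim p_\theta(F_v\mid y^w,P)$. To bridge the two, we would assume that $I_{\text{orig}}$ is typical under the $y^w$-posterior, which for VCPG is consistent with $I_{\text{orig}}\approx I^{+}$ as in Lemma~\ref{lem:dpi}(b). We would then absorb the resulting mismatch into a $\delta_{\mathrm{img}}$-type error term.

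The log-Sobolev and Jacobian conditions, together with this typicality assumption, are genuine assumptions and not consequences of anything proved earlier. This is why the statement is labelled heuristic. The proof will therefore present them explicitly as sufficient conditions.
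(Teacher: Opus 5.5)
\paragraph{Comparison with the paper's argument.} Your argument is sound as a sufficient-conditions derivation, but it follows a genuinely different route from the paper. Beyond the identity $\mathrm{Tr}(F_{\theta_1})=\mathbb{E}_{\mathcal{D}}[A^2\|\Delta(\theta_1)\|^2]$, the paper derives nothing. It appeals to an information-geometric intuition (citing Amari): a larger KL separation between $p_\theta(F_v\mid y^{w},P)$ and $p_\theta(F_v\mid y^{l},P)$ ``requires a larger parameter displacement'', which it equates with a larger $\|\Delta(\theta_1)\|$. It then argues qualitatively that VCPG's visually rich $y^{w}$ versus visually agnostic $y^{l}$ pushes these posteriors apart.

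You replace that intuition with an actual mechanism. The chain rule through $F_v$ plus Bayes' rule turns $\Delta(\theta_1)$ into the pullback $J^{\top}\nabla_{F_v}\log\bigl(p_\theta(F_v\mid y^{w},P)/p_\theta(F_v\mid y^{l},P)\bigr)$. Its second moment is a relative Fisher information, and a log-Sobolev inequality converts that into the explicit bound $\mathrm{Tr}(F_{\theta_1})\ge 2\kappa\gamma c_A\,\mathbb{E}_{\mathcal{D}}[D_{KL}]$. Your version buys a quantitative, checkable statement with every needed hypothesis exposed. The paper's version carries no hypotheses only because it is not a derivation.

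\paragraph{Points to tighten.}
\begin{itemize}
\item The factorization ``the only route from $\theta_1$ to the output runs through $F_v$'' is not part of the paper's definition of $\theta_1$. The paper defines $\theta_1$ by gradient responsiveness to the image, and in the actual implementation the trained LoRA weights sit in the LLM attention blocks and also act on text tokens. If $p_\theta(y\mid F_v,P)$ depends on $\theta_1$ directly, $\Delta(\theta_1)$ picks up an extra partial-derivative term. List this alongside your log-Sobolev, Jacobian and typicality hypotheses.
\item The margin filter bounds $\log p_{\text{ref}}(y^{w})-\log p_{\text{ref}}(y^{l})$, not $\hat{r}_w-\hat{r}_l$, so it does not control $A$. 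Only the near-initialization argument does.
\item The Jensen detour through Theorem~\ref{thm:psc-vcpg} is never used, since your log-Sobolev step bounds $\mathbb{E}\|\Delta(\theta_1)\|^2$ directly.
\item The typicality step can be sharpened. In the appendix's VCPG setting with $I^{+}=I_{\text{orig}}$ and $y^{w}$ sampled on-policy, the conditional law of $F_v$ given $(y^{w},P)$ is exactly $p_\theta(F_v\mid y^{w},P)$ by construction. The only real mismatch is that the Fisher expectation conditions on $(y^{w},y^{l})$ jointly. That mismatch vanishes to the extent that $y^{l}$ carries little information about the image, which is precisely the VCPG regime.
\end{itemize}
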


\noindent
\textbf{Definition of } \(p_\theta(F_v\mid y,P)\).
Under the induced joint distribution \((I,P)\sim\mathcal{D},\ F_v=g_\theta(I),\ Y\sim\pi_\theta(\cdot\mid I,P)\), we use
\[
p_\theta(F_v\mid y,P)\propto p_\theta(y\mid F_v,P)\,p_\theta(F_v\mid P)
\]
as the posterior over fused visual representations associated with generating \(y\) under prompt \(P\).

\paragraph{Heuristic explanation.}
The term \(\Delta(\theta_1)\) can be interpreted as the parameter-space force that shifts the model toward representation patterns consistent with \(y^w\) rather than \(y^l\). From an information-geometric viewpoint~\cite{amari2000methods}, larger KL separation between the two posteriors typically implies a larger required parameter displacement, which manifests as a larger \(\|\Delta(\theta_1)\|\) and thus a larger \(\mathrm{Tr}(F_{\theta_1})\).

The VCPG strategy, by design, generates preference pairs \((y^w, y^l)\) that are separated by a wide visual-dependency gap. A visually-rich preferred text \(y^w\) and a visually-agnostic dispreferred text \(y^l\) tend to require substantially different ideal visual features, meaning the posteriors \(p_\theta(F_v\mid y^w,P)\) and \(p_\theta(F_v\mid y^l,P)\) are typically farther apart. Therefore, VCPG increases the Fisher information in the vision-parameter subspace by enlarging the separation in the representation space, yielding a more targeted and effective optimization.

\section{Optimizing Attention Maps via Prompt Engineering}
\label{appendix:prompts}
\begin{wrapfigure}{r}{0.55\columnwidth}
    \vspace{-12pt}  
    \centering
    \includegraphics[width=\linewidth]{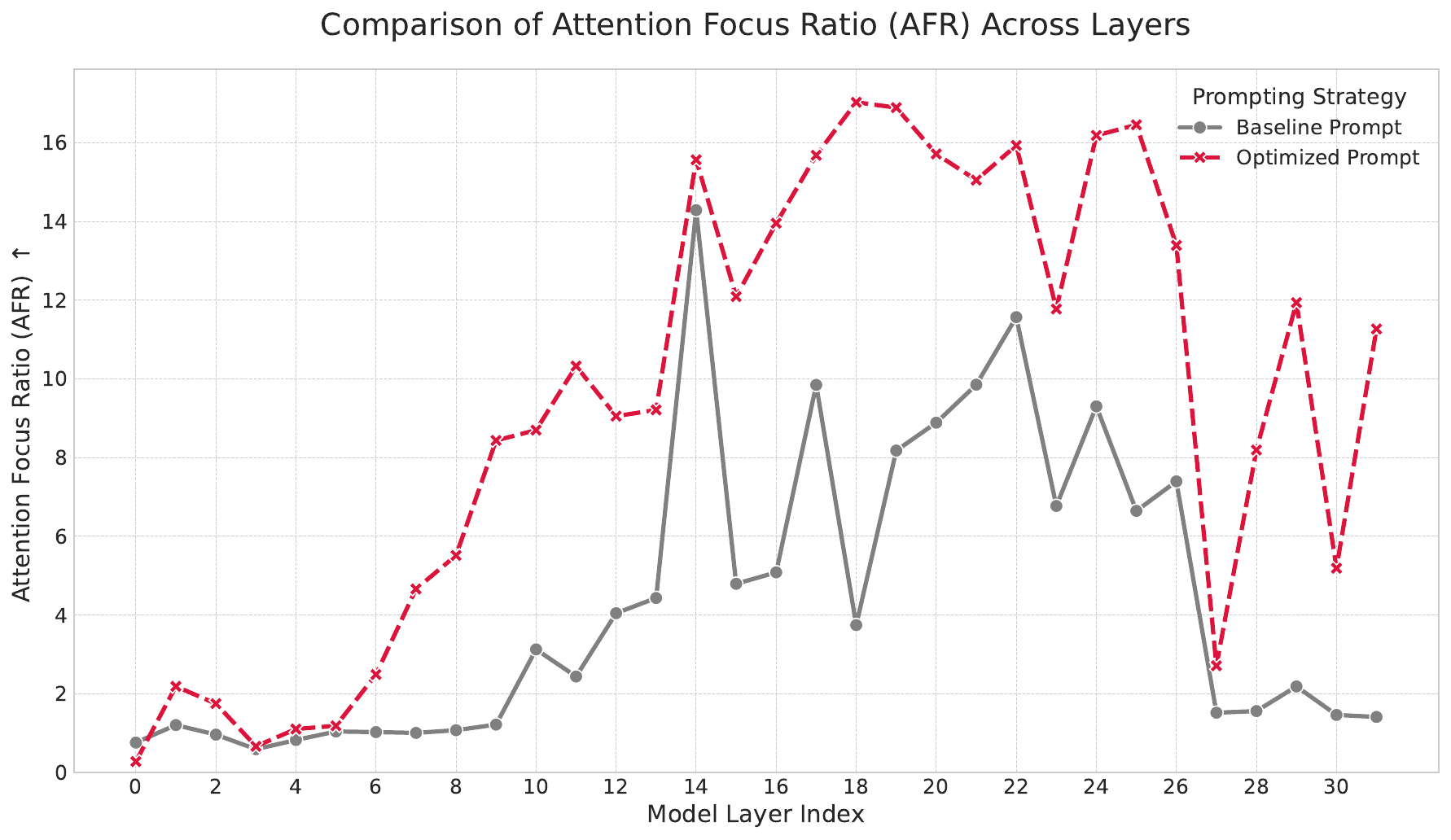}
    \captionsetup{font=scriptsize, skip=2.5pt}
    \caption{Layer-wise comparison of the Attention Focus Ratio (AFR) between the baseline and our optimized prompt. The optimized prompt (in red) consistently achieves a higher AFR, particularly in the mid-to-high layers (7-26), indicating a substantially improved ability to localize relevant visual information. This enhanced focus provides a high-quality signal for our subsequent cropping algorithm.}
    \label{fig:afr_comparison}
    \vspace{-15pt}  
\end{wrapfigure}
The efficacy of our adaptive cropping algorithm is critically dependent on the quality of the answer-to-image attention maps $\mathcal{A}$ produced by the reference model $M_{\text{ref}}$. To elicit concentrated and well-aligned maps, we first run $M_{\text{ref}}$ with carefully crafted prompts that explicitly ask the model to \emph{locate the relevant region before answering}. From this forward pass we extract two intermediate, head-averaged attention tensors: the answer-to-token attention $\mathcal{A}_{\text{tok}}$ and the token-to-image attention $\mathcal{A}_{\text{img}}$. For each module and layer we average over the $H$ heads, $\bar{\mathcal{A}}=\tfrac{1}{H}\sum_{h=1}^{H}\mathcal{A}_h$, and compose the final answer-to-image map via their tensor product:
\[
\mathcal{A}=\mathcal{A}_{\text{tok}}\otimes \mathcal{A}_{\text{img}}\in\mathbb{R}^{L\times L_c\times 1\times N^2},
\]
where $L$ and $L_c$ denote the number of layers in the LLM and the connector, respectively, and $N^2$ is the number of image patches. This $\mathcal{A}$ quantifies the relevance of each of the $N^2$ patches to the predicted answer.

Given $\mathcal{A}$, we derive a saliency map over the $N\times N$ patch grid and feed it to our adaptive cropping procedure to obtain a salient region $I_{\text{crop}}$ (Appendix~\ref{appendix:crop}). Concretely, a multi-scale sliding window searches over candidate boxes and maximizes a sharpness score—the normalized difference between the summed attention inside a window and that of its adjacent neighborhood—while enforcing image-bound constraints. The selected window is then mapped from patch indices to pixel coordinates to yield a tight bounding box. This crop is subsequently used to construct the Focus-and-Enhance visual contexts introduced in the main text: the enhanced input $I_{\text{aug}}=\text{Combine}(I, I_{\text{crop}})$ and the degraded input $I_{\text{deg}}=\text{Erase}(I,\text{Bbox}(I_{\text{crop}}))$, which together form vision-grounded preference pairs for P\textsuperscript{2}-DPO.

\subsection{Experimental Setup}

\paragraph{Dataset and Metric.}
To quantitatively evaluate attention localization, we use the \textbf{TextVQA}~\citep{singh2019towards} dataset, which provides ground-truth bounding boxes for textual answers. Our evaluation metric is the \textbf{Attention Focus Ratio (AFR)}~\citep{zhang2025mllms}, defined as the ratio of the mean attention intensity inside the ground-truth box to the mean attention intensity outside. A higher AFR score indicates a more accurate and focused localization.

\paragraph{Prompting Strategies.}
We compare two strategies:
\begin{itemize}[nosep,leftmargin=*]
    \item \textbf{Baseline Prompt:} The default user-assistant conversation template used by LLaVA~\citep{liu2024improved}, where the question is presented directly.
    \item \textbf{Optimized Prompt:} A structured, task-decomposing prompt we designed to explicitly guide the model's focus before answering.
\end{itemize}

\subsection{Results and Analysis}

Our experiments demonstrate that the optimized prompt significantly enhances the model's ability to focus on relevant visual evidence. The comparative results are visualized in Figure~\ref{fig:afr_comparison}, which plots the average AFR score across all 32 layers of the vision-language backbone for both prompting strategies.

The analysis of the plot reveals several key insights:

\textbf{Superior Overall Performance:} The ``Optimized Prompt'' curve (in red) lies substantially above the ``Baseline Prompt'' curve for the vast majority of layers, indicating a consistent and significant improvement in attention focusing.
    
\textbf{Impact on Deeper Layers:} The most dramatic improvements occur in the mid-to-high layers of the model (approximately layers 7 through 26). These layers are critical for abstract reasoning and semantic understanding. The high AFR in these layers suggests that our prompt is not just affecting low-level visual processing, but is successfully guiding the model's deeper cognitive processes to ground themselves in the correct visual evidence.

\textbf{Justification for Cropping:} The substantially higher AFR scores validate our approach. By using the optimized prompt, we can reliably generate high-quality, focused attention maps. These maps serve as an excellent foundation for our adaptive cropping algorithm to accurately isolate the most salient visual entity, which is the cornerstone of our Focus-and-Enhance data generation pipeline.

\subsection{Final Optimized Prompt Template}
Based on this quantitative validation, we adopted the following template for generating all attention maps in our pipeline. It modifies the standard LLaVA instruction format to include a ``chain-of-thought'' style directive.

\begin{quote}
\small
\texttt{USER: <image>\\Your task is to answer the question by first locating the relevant region in the image. Question: \{question\}\\ASSISTANT:}
\end{quote}
Here, \texttt{\{question\}} is the placeholder for the original question from the dataset. This structure compels the model to prioritize the localization task, resulting in the improved attention focus shown above.

\section{Cropping Algorithm}
\label{appendix:crop}
In this section, we provide the detailed methodology for our adaptive cropping algorithm, which is responsible for extracting the salient region $I_{\text{crop}}$ from a given image $I$ based on the model's internal attention maps $\mathcal{A}$.

\subsection{Algorithm Motivation and Overview}

The primary goal of this algorithm is to identify and isolate the most prominent visual entity that the model is attending to. A simple approach of taking the single point of maximum attention is often too noisy and fails to capture the full extent of an object. Therefore, our algorithm employs a more robust, multi-scale sliding window approach. It evaluates candidate bounding boxes of various sizes, selecting the one that not only contains high attention scores but also exhibits the sharpest contrast with its surrounding regions. This sharpness criterion, measured as a normalized attention difference, is key to ensuring that the crop is tightly focused on a single, salient entity rather than a diffuse area of mild attention. The entire process is detailed in Algorithm~\ref{alg:adaptive_crop}.

\subsection{Key Steps Explained}
\begin{algorithm}[H]
\caption{Adaptive Cropping from Attention Map}
\label{alg:adaptive_crop}
\begin{algorithmic}[1] 
    \REQUIRE Attention map $\mathcal{A}$, image size $(W, H)$, base box shape $(b_w, b_h)$
    \REQUIRE Scale ratios $R = \{1.0, 1.2, \dots, 2.0\}$
    \ENSURE Bounding box coordinates $(x_1, y_1, x_2, y_2)$

    \STATE Initialize $D_{\text{best}} \gets -\infty$, $B_{\text{best}} \gets \text{None}$

    \FOR{each ratio $r \in R$}
        \STATE {\color{gray}\textit{// Compute crop dimensions}}
        \STATE $w_r \gets b_w \cdot r$; \quad $h_r \gets b_h \cdot r$
        \STATE $N_w \gets \lfloor w_r / (W / \text{dim}_1(\mathcal{A})) \rfloor$
        \STATE $N_h \gets \lfloor h_r / (H / \text{dim}_0(\mathcal{A})) \rfloor$

        \STATE {\color{gray}\textit{// Sliding window attention sum}}
        \STATE $\mathcal{A}_{\text{sum}} \gets \text{SlidingWindowSum}(\mathcal{A}, (N_h, N_w))$

        \STATE {\color{gray}\textit{// Locate attention peak}}
        \STATE $(y_{\text{max}}, x_{\text{max}}) \gets \arg\max(\mathcal{A}_{\text{sum}})$
        \STATE $a_{\text{max}} \gets \mathcal{A}_{\text{sum}}[y_{\text{max}}, x_{\text{max}}]$
        \STATE $a_{\text{adj}} \gets \text{MeanAdjacentAttention}(\mathcal{A}_{\text{sum}}, (y_{\text{max}}, x_{\text{max}}))$

        \STATE {\color{gray}\textit{// Score attention sharpness}}
        \STATE $D_r \gets (a_{\text{max}} - a_{\text{adj}}) / (N_w \cdot N_h)$

        \IF{$D_r > D_{\text{best}}$}
            \STATE $D_{\text{best}} \gets D_r$
            \STATE $B_{\text{best}} \gets ((y_{\text{max}}, x_{\text{max}}), (N_h, N_w), (h_r, w_r))$
        \ENDIF
    \ENDFOR

    \STATE {\color{gray}\textit{// Decode best crop box}}
    \STATE $((y^*, x^*), (N_h^*, N_w^*), (h^*, w^*)) \gets B_{\text{best}}$
    \STATE $x_c \gets x^* \cdot (W / \text{dim}_1(\mathcal{A})) + w^*/2$
    \STATE $y_c \gets y^* \cdot (H / \text{dim}_0(\mathcal{A})) + h^*/2$
    \STATE $x_c \gets \text{clip}(x_c, w^*/2, W - w^*/2)$
    \STATE $y_c \gets \text{clip}(y_c, h^*/2, H - h^*/2)$

    \STATE {\color{gray}\textit{// Convert to box coordinates}}
    \STATE $x_1 \gets x_c - w^*/2$; \quad $y_1 \gets y_c - h^*/2$
    \STATE $x_2 \gets x_c + w^*/2$; \quad $y_2 \gets y_c + h^*/2$

    \RETURN $(\text{int}(x_1), \text{int}(y_1), \text{int}(x_2), \text{int}(y_2))$
\end{algorithmic}
\end{algorithm}

\textbf{Lines 2-6:} The core of the algorithm is a loop over several scale ratios. For each ratio, it calculates the target crop dimensions ($w_r, h_r$) and converts them into the corresponding number of blocks ($N_w, N_h$) in the attention map. This allows the algorithm to search for objects of different sizes.

\textbf{Line 8:} A 2D summed-area table (or a direct sliding window sum) is used to efficiently calculate the total attention within every possible rectangular window of size $(N_h, N_w)$. This creates a new map $\mathcal{A}_{\text{sum}}$ where each point represents the attention sum of a candidate region.

\textbf{Lines 14-18:} This is the crucial step for selecting the best scale. Instead of picking the window with the absolute highest attention sum (which would favor larger windows), we calculate a normalized difference score $D_r$ (Line 14). This score measures how much ``sharper'' the peak attention is compared to its immediate neighbors. By dividing by the area of the window ($N_w \times N_h$), we normalize for size and use the subsequent \texttt{if} block (Lines 15-18) to select the scale that produces the most focused, unambiguous attention peak.

\textbf{Lines 21-29:} Once the best-performing scale ratio and window position are identified, the algorithm unpacks this information (Line 21) and converts it back into image-space coordinates. It calculates the center of the bounding box (Lines 22-23), clips it to ensure it stays within the image boundaries (Lines 24-25), and finally computes and returns the top-left and bottom-right coordinates (Lines 27-29).

\section{Data Filtering Implementation Details}
\label{appendix:Data Filtering Implementation Details}
\begin{wraptable}{r}{0.5\textwidth} 
    \vspace{-12pt}
    \centering
    \captionsetup{font=footnotesize} 
    \caption{Statistical summary of filtering metrics for the 5,733 raw preference pairs. Thresholds were determined based on this distribution to retain approximately 95\% of the data.}
    \label{tab:filtering_stats}
    \small 
    \setlength{\tabcolsep}{3pt} 

    \begin{tabular}{lccc}
    \toprule
    \textbf{Statistic} & 
    \shortstack{\textbf{PPL} \\ \textbf{(Win)}} & 
    \shortstack{\textbf{PPL} \\ \textbf{(Lose)}} & 
    \shortstack{\textbf{Log-Prob} \\ \textbf{Margin}} \\
    \midrule
    Mean & 1.76 & 1.79 & 0.017 \\
    Std. Dev. & 0.16 & 0.15 & 0.094 \\
    Min & 1.34 & 1.24 & -0.294 \\
    25\% (Q1) & 1.66 & 1.69 & -0.039 \\
    50\% (Median) & 1.74 & 1.77 & 0.015 \\
    75\% (Q3) & 1.85 & 1.87 & 0.075 \\
    Max & 2.48 & 2.78 & 0.402 \\
    \midrule
    \textbf{Final Threshold} & \multicolumn{2}{c}{$\max(\cdot) < 2.50$} & $[{-0.174}, 0.208]$ \\
    \bottomrule
    \end{tabular}
\end{wraptable}
To ensure the quality and utility of our self-generated preference pairs, we applied a principled, data-driven filtering process. The raw dataset, consisting of 5,733 pairs, was analyzed based on two key metrics derived from the reference model ($\pi_{\text{ref}}$): Perplexity (PPL) for fluency, and Log-Probability Margin for learning utility. The goal was to remove outliers and retain approximately 95\% of the data, focusing on samples that were both fluent and presented a meaningful learning challenge. The 95\% retention target was chosen to balance two competing goals: filtering out low-quality or uninformative samples, while preserving sufficient data diversity and volume for effective training. This range also aligns with standard statistical practices (e.g., 2.5th–97.5th percentiles) for outlier exclusion, ensuring that the retained dataset reflects the core distribution without being skewed by extreme values.

Our filtering process is based on a two-fold criterion applied to each instance:
\paragraph{1. Fluency Control via Perplexity (PPL).}
Perplexity measures the fluency of a generated response. A high PPL often indicates incoherent or grammatically incorrect text. To remove such outliers, we specified a maximum perplexity threshold, $\tau_{\text{ppl}}$, and discarded any pair where either the winning or losing response exceeded this threshold. Based on our data distribution (see Table~\ref{tab:filtering_stats}), where the 99.9th percentile for the maximum PPL was 2.48, we set a conservative threshold to avoid removing valid samples.
\begin{itemize}
    \item \textbf{Criterion:} $\max(\text{PPL}(y^{w}), \text{PPL}(y^{l})) < \tau_{\text{ppl}}$
    \item \textbf{Selected Value:} $\tau_{\text{ppl}} = 2.50$
\end{itemize}

\paragraph{2. Learning Utility via Log-Probability Margin.}
The log-probability margin, defined as $M = \log p_{\text{ref}}(y^{w}) - \log p_{\text{ref}}(y^{l})$, quantifies the reference model's preference strength and indicates the learning difficulty of a pair. Pairs with extreme margin values (either very large and positive, or very negative) are less effective for training. To select for samples in the sweet spot
of learning, we retained only those pairs whose margin fell within the central 95\% of the data distribution. This was achieved by setting the lower and upper bounds, $\theta_{\text{low}}$ and $\theta_{\text{high}}$, to the 2.5th and 97.5th percentiles of the margin distribution, respectively.
\begin{itemize}
    \item \textbf{Criterion:} $\theta_{\text{low}} \le M \le \theta_{\text{high}}$
    \item \textbf{Selected Values:} $\theta_{\text{low}} = -0.174$, $\theta_{\text{high}} = 0.208$
\end{itemize}
\subsection{Summary of Data Distribution and Filtering}

Table~\ref{tab:filtering_stats} summarizes the statistical properties of the key metrics in our raw dataset. The final filtering criteria were chosen based on this analysis to be robust and reproducible. After applying both criteria, 5,446 pairs (94.9\%) were retained for the final preference pairs.

\section{Derivation of the Calibration Loss ($\mathcal{L}_{\text{Calib}}$)}
\label{appendix:calib_derivation}

The standard Direct Preference Optimization (DPO) framework is derived from a preference model over pairs of textual responses \((y^{w}, y^{l})\). Our novel Calibration Loss, $\mathcal{L}_{\text{Calib}}$, extends this principle to a new domain: it models a preference over \textit{perceptual confidence gains}. This appendix provides a first-principles derivation of $\mathcal{L}_{\text{Calib}}$ from a reward modeling perspective, analogous to the derivation of the standard DPO loss.

\subsection{A Reward Model for Perceptual Confidence Gain}

First, we define the core quantity of interest: the \textbf{Perceptual Confidence Gain}. For a given policy $\pi$ and a response $y$, we define its gain $\Delta_\pi(y)$ as the increase in log-probability when the model is provided with the salient visual crop $I_{\text{crop}}$ versus the degraded context $I_{\text{deg}}$:
\begin{equation}
    \Delta_\pi(y) = \log \pi(y|I, I_{\text{crop}}, P) - \log \pi(y|I_{\text{deg}}, P)
\end{equation}
This value, $\Delta_\pi(y)$, quantifies how much the policy's confidence in response $y$ is \textit{calibrated} by the specific visual evidence in $I_{\text{crop}}$.

Our goal is to train the policy $\pi_\theta$ to achieve a high confidence gain on the \textit{correct} response ($y^{w}$), while ensuring this gain is meaningfully larger than any spurious gain the model might have on an \textit{incorrect} response ($y^{l}$). To formalize this, we define a hybrid preference model. We posit that the confidence gain from our policy $\pi_\theta$ on the winning response $y^{w}$ should be preferable to the confidence gain from the reference policy $\pi_{\text{ref}}$ on the losing response $y^{l}$.

Using a Bradley-Terry-like model, we can express this preference $p^*$ as:
\begin{equation}
    p^*(\Delta_{\pi_\theta}(y^{w}) \succ \Delta_{\pi_{\text{ref}}}(y^{l})) = \sigma(r(\Delta_{\pi_\theta}(y^{w})) - r(\Delta_{\pi_{\text{ref}}}(y^{l})))
\end{equation}
where $r(\cdot)$ is an implicit reward function over the confidence gains.

\subsection{Deriving the Loss Objective}

For simplicity and directness, we define the reward of a given confidence gain as being directly proportional to the gain itself, scaled by the DPO temperature parameter $\beta$:
\begin{equation}
    r(\Delta_\pi(y)) = \beta \cdot \Delta_\pi(y)
\end{equation}
Substituting this reward definition back into our preference model, we get:
\begin{align}
    &p^*(\Delta_{\pi_\theta}(y^{w}) \succ \Delta_{\pi_{\text{ref}}}(y^{l})) \nonumber
    = \sigma(\beta \cdot \Delta_{\pi_\theta}(y^{w}) - \beta \cdot \Delta_{\pi_{\text{ref}}}(y^{l}))
\end{align}
Now, we expand the gain terms $\Delta$ using their definitions:
\begin{align}
    &p^*(\dots) = \sigma \bigg( \beta \left( \log\frac{\pi_\theta(y^{w}|I, I_{\text{crop}}, P)}{\pi_\theta(y^{w}|I_{\text{deg}}, P)} \right) \nonumber - \beta \left( \log\frac{\pi_{\text{ref}}(y^{l}|I, I_{\text{crop}}, P)}{\pi_{\text{ref}}(y^{l}|I_{\text{deg}}, P)} \right) \bigg)
\end{align}
The final step is to formulate the training objective using Maximum Likelihood Estimation (MLE). We aim to find the policy $\pi_\theta$ that maximizes the log-likelihood of this preference data. The negative log-likelihood loss for a single preference pair is therefore:
\begin{equation}
    \mathcal{L} = -\log p^*(\Delta_{\pi_\theta}(y^{w}) \succ \Delta_{\pi_{\text{ref}}}(y^{l}))
\end{equation}
This gives us precisely the form of our Calibration Loss, $\mathcal{L}_{\text{Calib}}$, averaged over the dataset $D_{\text{focus}}$:
\begin{equation} \label{eq:calib_loss_single_line_detailed}
    \mathcal{L}_{\text{Calib}} = -\mathbb{E}_{(y^{w}, y^{l}) \sim D_{\text{focus}}} \left[ \log \sigma \biggl( \beta \log\frac{\pi_\theta(y^{w}|I, I_{\text{crop}}, P)}{\pi_\theta(y^{w}|I_{\text{deg}}, P)} - \beta \log\frac{\pi_{\text{ref}}(y^{l}|I, I_{\text{crop}}, P)}{\pi_{\text{ref}}(y^{l}|I_{\text{deg}}, P)} \biggr) \right]
\end{equation}
This derivation demonstrates that $\mathcal{L}_{\text{Calib}}$ is not an ad-hoc objective, but a principled loss grounded in a reward model tailored to instill visual causality by maximizing the confidence gain on correct responses, regularized by the baseline gain on incorrect ones.

\subsection{Connection to Visual Information Dependency (VID)}

We now analyze how minimizing $\mathcal{L}_{\text{Calib}}$ implicitly maximizes the Visual Information Dependency (VID) of the winning response $y^{w}$. Recall that VID is defined as the conditional mutual information \(I(Y; F_v \mid P,\theta)\), where \(Y\sim\pi_\theta(\cdot\mid I,P)\) and \(F_v=g_\theta(I)\). The objective of $\mathcal{L}_{\text{Calib}}$ is to maximize the term inside the sigmoid, particularly the perceptual confidence gain for the winning response:
\begin{equation}
    \Delta_{\pi_\theta}(y^{w}) = \log \pi_\theta(y^{w}|I, I_{\text{crop}}, P) - \log \pi_\theta(y^{w}|I_{\text{deg}}, P)
\end{equation}
Let's analyze the two components of this term. The enhanced context \((I, I_{\text{crop}})\) provides a rich visual representation, which we denote as \(F_v^+\). Conversely, the degraded context \(I_{\text{deg}}\) provides a noisy and uninformative representation, \(F_v^-\). Therefore, maximizing \(\Delta_{\pi_\theta}(y^{w})\) is equivalent to maximizing the difference in log-likelihoods given these two representations:
\begin{equation}
    \max_{\theta} \left( \log \pi_\theta(y^{w}|F_v^+) - \log \pi_\theta(y^{w}|F_v^-) \right)
\end{equation}
From an information-theoretic viewpoint, maximizing \(\log \pi_\theta(y|F_v)\) is closely related to increasing the dependence between the output random variable and the visual representation, i.e., \(I(Y; F_v \mid P,\theta)\)~\citep{alemi2016deep}. Thus, the optimization forces the model to learn a policy $\pi_\theta$ that satisfies two conditions simultaneously:
\begin{enumerate}[nosep, leftmargin=*]
    \item \textbf{High Information Utilization:} To maximize \(\log \pi_\theta(y^{w}|F_v^+)\), the model must learn to extract and utilize a large amount of information from the rich visual features \(F_v^+\). This directly pushes for a higher conditional mutual information \(I(Y; F_v^{+}\mid P,\theta)\).
    \item \textbf{Low Information Reliance on Noise:} To minimize \(\log \pi_\theta(y^{w}|F_v^-)\), the model must learn to \textit{distrust} the noisy features \(F_v^-\) and reduce its reliance on them for generating \(y^{w}\).
\end{enumerate}
By jointly optimizing for both conditions, the training process explicitly rewards the model for making its generation of \(y^{w}\) more dependent on the high-quality visual evidence present in \(F_v^+\). This directly translates to an increase in Visual Information Dependency (VID), effectively teaching the model to ``see'' rather than merely ``guess''.

\section{Implementation Details}
\label{appendix:implementation_details}

This section provides a comprehensive overview of the experimental setup and hyperparameters used to obtain the results presented in the main paper.

\subsection{General Training Configuration}
\label{subsec:training_setup}

All models are fine-tuned using the AdamW optimizer~\citep{loshchilov2017decoupled}. We employ a cosine learning rate schedule with a peak learning rate of \(2 \times 10^{-5}\), a linear warm-up phase of 100 steps, and a final learning rate annealed to zero. The weight decay is set to 0.1. To ensure computational and memory efficiency, all fine-tuning is performed using Low-Rank Adaptation (LoRA)~\citep{hu2022lora}. We apply LoRA adapters with a rank of \(r=64\) and an alpha scaling factor of \(\alpha=64\) to all linear projection layers within the attention blocks of the language model. For all Direct Preference Optimization (DPO) based loss components, the temperature parameter \(\beta\) is consistently set to 0.1.

All experiments were conducted on a system equipped with 2 NVIDIA A100 (80GB) GPUs. Our implementation is built upon the PyTorch framework and leverages the Hugging Face Transformers and PEFT libraries.

\subsection{P\textsuperscript{2}-DPO Specific Hyperparameters}
\label{subsec:p2dpo_hyperparams}

The key hyperparameters unique to our P\textsuperscript{2}-DPO framework are detailed below. These values were selected based on preliminary experiments and held constant across all main results.

\begin{itemize}[nosep, leftmargin=*]
    \item \textbf{Calibration Loss Weight (\(\lambda_{\text{calib}}\)):} The balancing coefficient for the Calibration Loss is set to \(\lambda_{\text{calib}} = 0.3\).

    \item \textbf{Contrastive Amplification Coefficient (\(\lambda_{\text{ca}}\)):} For generating the \textit{Visual Robustness Pairs}, the coefficient for Contrastive Amplification is set to \(\lambda_{\text{ca}} = 0.1\).

    \item \textbf{Dynamic Deficit-Weighting (DDW) Parameters:}
    \begin{itemize}[nosep, leftmargin=15pt]
        \item Base weight (\(w_{\text{base}}\)): \(0.5\)
        \item Maximum adjustment factor (\(\alpha_{\text{max}}\)): \(0.2\)
        \item Temperature (\(\tau\)): \(0.1\)
    \end{itemize}
\end{itemize}

\section{Additional Experiments on Data Quality}
\label{app:data_quality}

In this section, we present two additional experiments that specifically examine the quality of our constructed training data: (1) an external validation of the PPL + Margin filtering strategy using human-annotated references, and (2) an analysis of the reasonableness of CLIPScore as a dynamic weighting signal for attention-guided crops.

\subsection{Effectiveness of PPL + Margin Filtering}
\label{app:data_quality_filtering}

Our preference pairs are constructed from the prompts and images in the RLHF-V dataset. During training, we use only the prompt and image and do \emph{not} use the human preference annotations provided by RLHF-V. To validate the quality of our self-constructed preference pairs and to check whether the PPL + Margin filter truly identifies ``problematic pairs'', we construct an external evaluation task based on RLHF-V's human annotations.

\paragraph{Evaluation task with human references.}
Each RLHF-V sample contains an image, a question, and two human responses:
a high-quality answer $A_{\text{HQ}}$ (High-Quality Answer) and a low-quality answer $A_{\text{LQ}}$ (Low-Quality Answer).
In this experiment, these human annotations are used only as evaluation references and do not participate in training.
\begin{table}[t]
\centering
\caption{CLIPScore statistics for attention-guided vs.\ inverse cropping.}
\label{tab:clipscore_stats}
\resizebox{\linewidth}{!}{%
\begin{tabular}{l l r r r r r r}
\toprule
Group & Metric & Count & Mean & Median & Std & Min & Max \\
\midrule
\texttt{att\_crop} & improvement\_ratio & 500 & 1.22 & 1.16 & 0.20 & 1.06 & 3.17 \\
\texttt{att\_crop} & improvement\_diff  & 500 & 3.64 & 3.09 & 1.85 & 1.85 & 12.25 \\
\texttt{att\_inverse} & improvement\_ratio & 500 & 0.93 & 0.93 & 0.18 & 0.22 & 1.55 \\
\texttt{att\_inverse} & improvement\_diff  & 500 & -1.76 & -1.29 & 3.75 & -15.15 & 8.95 \\
\bottomrule
\end{tabular}%
}
\end{table}

For each self-constructed preference pair $(y^{w}, y^{l})$, we construct an evaluation prompt that contains:
(i) the original question; (ii) the human high-quality answer as a golden reference; (iii) the human low-quality answer as a negative reference; and (iv) two model responses, denoted as \emph{Response A} and \emph{Response B}, corresponding to $y^{w}$ and $y^{l}$, randomly assigned to A/B.

A strong general-purpose language model (Qwen3-Max) is then used as an evaluator to assign a quality score in $\{1, 2, 3\}$ to each of A and B, based on their semantic proximity to the human high-/low-quality answers:
\begin{itemize}
    \item \textbf{3 points}: semantically close to $A_{\text{HQ}}$, with only minor descriptive differences and correctly capturing the main idea;
    \item \textbf{2 points}: basically reasonable but not fully aligned, e.g., missing some information or adding harmless verbosity;
    \item \textbf{1 point}: closer to $A_{\text{LQ}}$ or clearly wrong / hallucinatory.
\end{itemize}

\paragraph{Standard-answer-based invalid-sample criterion.}
Within this framework, we label a preference pair as an \emph{invalid sample that should be removed} if either of the following holds:
\begin{itemize}
    \item the response we mark as the winner, $y^{w}$, receives the lowest score (1 point), indicating that it is semantically closer to the human low-quality answer;
    \item $y^{w}$ receives a strictly lower score than $y^{l}$, indicating that the evaluator believes that the true preference direction is opposite to our constructed win/lose relation.
\end{itemize}

This ``standard-answer filter'' depends only on the question, the human high-/low-quality answers, and our generated responses. It is completely independent of PPL, Margin, or attention maps, and thus serves as an external semantic-level reference.

\paragraph{Quantitative comparison with PPL + Margin.}
According to the above standard-answer filter:
\begin{itemize}
    \item the Qwen-based evaluator marks \textbf{510} preference pairs as invalid samples;
    \item our PPL + Margin filtering alone removes \textbf{287} samples;
    \item the intersection between these two sets contains \textbf{236} samples.
\end{itemize}
Thus, among the 287 samples removed by PPL + Margin, we have
\[
\frac{236}{287} \approx 82\%
\]
that are \emph{also} judged invalid by the independent standard-answer filter (Qwen + human high/low-quality answers).

Two observations follow:
\begin{itemize}
    \item \textbf{High overlap indicates that the rules are not arbitrary.}
    More than 80\% of the samples removed by PPL + Margin are simultaneously flagged as problematic by an independent evaluator that explicitly compares against human high-/low-quality answers. This suggests that the PPL + Margin filter is highly consistent with a human-anchored semantic criterion, rather than pruning examples at random.

    \item \textbf{The thresholds are intentionally conservative.}
    Although the standard-answer filter identifies 510 suspicious samples in total, PPL + Margin removes only a subset of them (287 samples) and retains roughly 95\% of the overall data. PPL + Margin therefore acts as a \emph{fallback cleaner for extremely bad samples}, instead of aggressively deleting all borderline cases, and thus achieves a balance between data quality and data diversity.
\end{itemize}

Overall, this external evaluation confirms that our PPL + Margin filtering strategy reliably identifies and removes a large portion of low-quality or preference-direction-incorrect samples, while remaining conservative in scale.

\subsection{Reasonableness of CLIPScore-Based Dynamic Weighting}
\label{app:data_quality_clipscore}

We further design a controlled experiment to verify the reasonableness of using CLIPScore as a dynamic weighting signal in our data construction pipeline.

\paragraph{Setup.}
We randomly sample 500 examples from the training set and compute CLIPScore between the prompt $P$ and:
(i) the original image $I$, and
(ii) the attention-guided cropped image $I_{\text{crop}}$.
We define
\[
\mathrm{diff} =
\mathrm{CLIPScore}(P, I_{\text{crop}}) -
\mathrm{CLIPScore}(P, I),
\]
\[
\mathrm{ratio} =
\frac{\mathrm{CLIPScore}(P, I_{\text{crop}})}
     {\mathrm{CLIPScore}(P, I)}.
\]

At the same time, we construct a \emph{reverse-cropping} control group: for the same set of examples, we crop from the complementary (low-attention) region of the attention map to obtain $I_{\text{inv}}$, and compute the corresponding $\mathrm{diff}$ and $\mathrm{ratio}$ in the same way. We denote the attention-guided cropping group as \texttt{att\_crop}, and the reverse-cropping group as \texttt{att\_inverse}.

\paragraph{Results.}
Table~\ref{tab:clipscore_stats} summarizes the statistics, and the full distributions (omitted here for space) show consistent trends.

In the \texttt{att\_crop} group, the mean of $\mathrm{improvement\_ratio}$ is about $1.22$, and the median is about $1.16$, indicating that attention-guided cropping systematically improves CLIPScore, with roughly a 20\% relative gain on average. The mean of $\mathrm{improvement\_diff}$ is about $3.64$, and the median is about $3.09$, showing that the semantic similarity between the cropped image and the prompt is consistently increased for most samples.

In contrast, in the \texttt{att\_inverse} group, the mean of $\mathrm{improvement\_ratio}$ is about $0.93$, and the median is also about $0.93$, both below $1$, meaning that when we deliberately crop from regions not attended by the model, CLIPScore decreases in a systematic way. The mean of $\mathrm{improvement\_diff}$ is about $-1.76$, and the median is about $-1.29$, exhibiting a clear negative shift.

Although the standard deviations indicate some variation across individual samples, the signs and magnitudes of the means and medians make the overall pattern clear:
when cropping along high-response attention regions (\texttt{att\_crop}), CLIPScore typically \emph{improves}; when cropping from complementary regions (\texttt{att\_inverse}), CLIPScore typically \emph{degrades}.

\paragraph{Implications for dynamic weighting.}
These results show that CLIPScore is highly sensitive to whether the crop focuses on semantically relevant regions. When the crop aligns with high-response attention regions, most samples satisfy $\mathrm{ratio} > 1$ and $\mathrm{diff} > 0$; when the crop lies in complementary regions, most samples satisfy $\mathrm{ratio} < 1$ and $\mathrm{diff} < 0$.
Using $\mathrm{ratio}$ and $\mathrm{diff}$ to drive the dynamic weighting of $w_{\text{focus}}$ and $w_{\text{robust}}$ therefore allows us, in a statistical sense, to amplify the contribution of ``well-cropped'' samples and suppress the impact of ``badly cropped'' samples, which is precisely the intended behavior of our weighting mechanism.

\section{Analysis of Attention Redistribution and Focusing}
\label{app:attention_analysis}

To quantify how training reshapes the balance between textual and visual evidence inside the LVLMs, we analyze (i) global and layer-wise shifts in attention from text tokens to image tokens, and (ii) the evolution of the Attention Focusing Ratio (AFR), which measures how tightly attention concentrates on ground-truth object regions.

\subsection{Global and Layer-wise Shift from Textual to Visual Attention}

We first measure how much attention answer tokens allocate to visual tokens versus textual tokens across training. For each checkpoint (base model and epochs 1–4), we compute two aggregated statistics over the entire preference-training split (more than 5{,}000 multimodal examples):

\begin{itemize}
    \item \textbf{Layer-wise mean image ratio}: for each layer, we compute the fraction of attention mass from answer tokens to image tokens (\texttt{image\_sum\_ratio}), then average this ratio across all layers.
    \item \textbf{Global image ratio across all layers}:
    \[
    \text{Global Image Ratio} =
    \frac{\text{image-sum}}{\text{image-sum} + \text{text-sum}},
    \]
    where \texttt{image-sum} and \texttt{text-sum} aggregate attention weights to image/text tokens over all layers and heads.
\end{itemize}

Both aggregation schemes show a consistent global shift toward visual evidence: the mean image ratio increases from \textbf{0.1971} (base) to \textbf{0.2055} (epoch 4), a relative gain of about \textbf{+4.3\%}; the global image ratio increases from \textbf{0.2204} to \textbf{0.2291}, a relative gain of about \textbf{+4.0\%}. Although the absolute change ($\approx 0.0087$) is small on the $[0,1]$ scale, these metrics are averaged over all layers, heads, and answer tokens, and are computed on the full training split, indicating a stable redistribution of attention probability mass from text to image tokens rather than noise on a small subset. The epoch-wise trend remains consistently above the base level and aligns with the reduction of hallucination metrics reported in the main results, suggesting that improved visual grounding is accompanied by a measurable shift in attentional reliance toward image tokens.

To better understand how this shift is organized inside the network, we further break down the visual attention ratio layer-wise by grouping the 32 Transformer layers into low (0–7), mid (8–17), and high (18–31) segments. The change is not a uniform “flat boost” of visual attention, but a structured redistribution:

\begin{itemize}
    \item In the \textbf{low segment} (layers 0–7), the average visual ratio increases only modestly (about \textbf{+1.5\%} relatively), and in some shallow layers (e.g., layers 2–5) it even slightly decreases.
    \item In the \textbf{mid segment} (layers 8–17) and \textbf{high segment} (layers 18–31), the average visual ratio increases more strongly, by roughly \textbf{+5.5\%} and \textbf{+6.1\%} respectively.
    \item Several mid/high layers exhibit the most prominent gains, for example
    \[
    \text{layer 21: } 0.2535 \rightarrow 0.2710, \quad
    \text{layer 30: } 0.1058 \rightarrow 0.1283,
    \]
\end{itemize}

For some upper layers (e.g., layers 22, 27, 30), the visual attention ratio exhibits a nearly monotonic increase from epoch 0 to epoch 4, while shallow layers remain comparatively stable. This pattern indicates that the model is reallocating attention budget from early layers toward mid/high layers, which have been identified as key stages for visual–language fusion and hallucination mitigation in prior analyses of LVLMs internals~[2,3,4]. Importantly, our training objective does not impose any explicit layer-wise rescaling; the observed redistribution emerges from optimization under the calibration-based preference objectives, which are designed to make preference differences depend on visual interventions rather than purely textual paraphrasing.

\subsection{Attention Focusing Ratio (AFR): Focusing on Ground-Truth Object Regions}

Global shifts toward image tokens do not, by themselves, guarantee better grounding. To verify that the model is increasingly focusing on \emph{correct} regions, we revisit the Attention Focusing Ratio (AFR) defined in Sec.~5.2 of the main paper. Using TextVQA, which provides ground-truth bounding boxes.Higher \(\text{AFR}_l\) indicates a stronger concentration of attention on the ground-truth object region relative to background or irrelevant areas.

In the main paper, AFR was reported only for the base model and final model. Here we include intermediate checkpoints at epochs 1–3 to track the full training trajectory. The \emph{average AFR over all layers} increases from \textbf{5.54} (base) to \textbf{5.64} (epoch 4), a relative gain of about \textbf{+1.8\%}, indicating a moderate but consistent strengthening of attention to key object regions. Grouping layers into low/mid/high segments yields:

\begin{itemize}
    \item \textbf{Low}: \(0.75 \rightarrow 0.79 \rightarrow 0.79 \rightarrow 0.78 \rightarrow 0.78\) (absolute change \(\approx \textbf{+0.03}\));
    \item \textbf{Mid}: \(7.05 \rightarrow 7.03 \rightarrow 7.10 \rightarrow 7.12 \rightarrow 7.16\) (absolute change \(\approx \textbf{+0.11}\));
    \item \textbf{High}: \(7.20 \rightarrow 7.30 \rightarrow 7.32 \rightarrow 7.32 \rightarrow 7.33\) (absolute change \(\approx \textbf{+0.13}\)).
\end{itemize}

The largest absolute gains thus occur in mid/high segments, while the low segment fluctuates slightly around its base value. The overall shape of the AFR curve over layers (i.e., which layers act as peaks and valleys) remains almost unchanged: ``visual--semantic'' layers such as layer 14 already have high AFR in the base model and remain the main peaks, but their peak values are further amplified. For example,
\[
\text{layer 14: } 14.74 \rightarrow 17.10 \rightarrow 18.02 \rightarrow 18.44 \rightarrow 18.72,
\]
a relative increase of about \textbf{+27\%}. Thus, the layers that were originally most responsible for focusing on ground-truth object regions become even more concentrated on these regions under our training, rather than being replaced by a different subset of layers.

Taken together, the global, layer-wise, and AFR analyses indicate that the proposed training scheme preserves the original hierarchical division of labor across layers while gently but directionally reallocating attention from shallow layers and background regions toward mid/high layers and ground-truth object regions. The model not only “looks more at the image” in aggregate, but does so in a more structured and semantically focused manner, consistent with the reductions in hallucination observed in the main quantitative results.

\section{Additional Examples}
\label{appendix:qualitative_examples}

We present several additional examples to visually demonstrate the effectiveness of P\textsuperscript{2}-DPO in rectifying the specific perceptual failures we identified. These case studies highlight our model's ability to overcome the Perceptual Bottleneck, its enhanced robustness against visual noise, and its overall superior performance in complex reasoning compared to the baseline LLaVA-1.5-7B.
\clearpage  

\begingroup
\setlength{\abovecaptionskip}{2pt}
\setlength{\belowcaptionskip}{0pt}

\vspace*{-10pt}

\centering
\includegraphics[width=\textwidth]{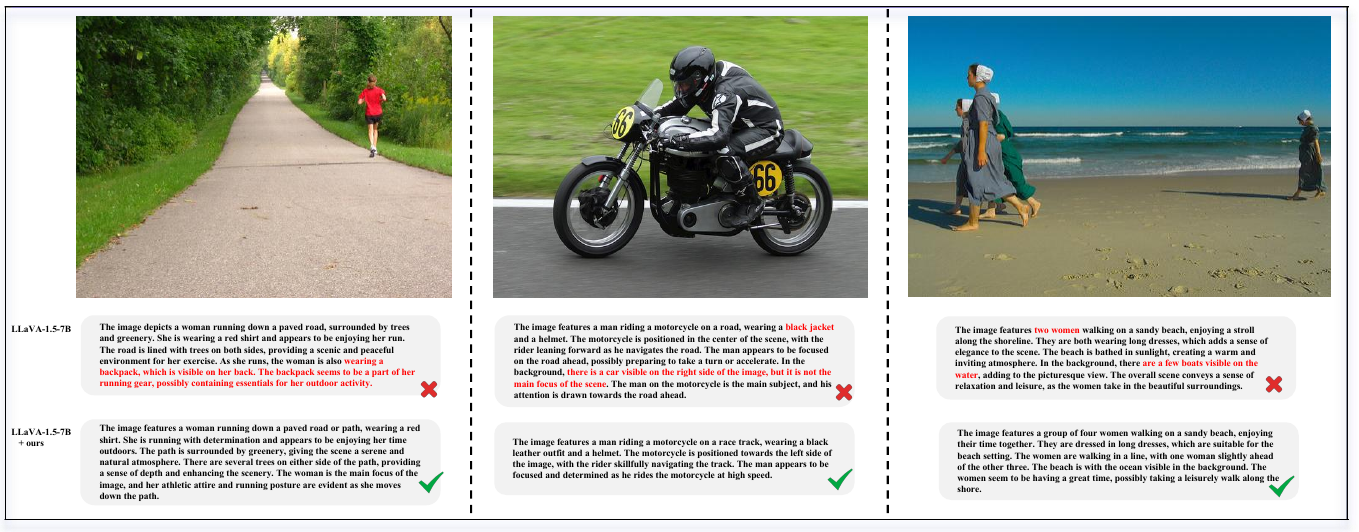}
\captionof{figure}{
    Example of improved overall performance in a complex scene.
    Given a general prompt such as ``\texttt{Describe this image}'',
    P\textsuperscript{2}-DPO produces a more accurate and detailed description than the baseline,
    showcasing the holistic improvements brought by our method.
}
\label{fig:qual_complex}

\vspace{20pt} 

\captionsetup{font=footnotesize}
\includegraphics[width=\textwidth]{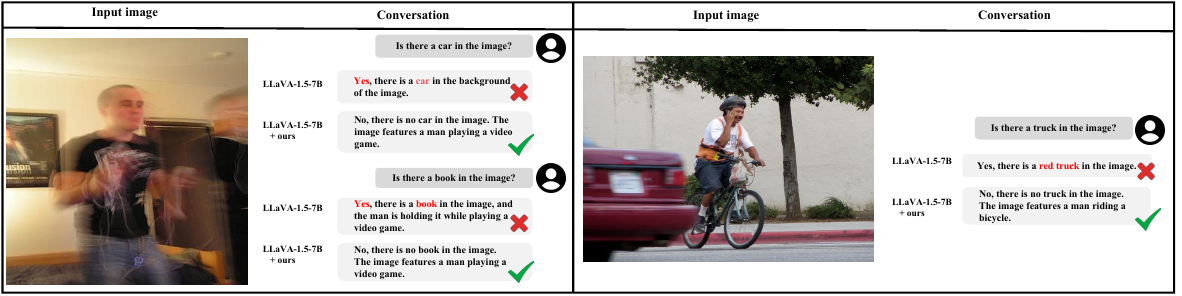}
\captionof{figure}{
    Visual robustness under motion blur.
    P\textsuperscript{2}-DPO correctly rejects hallucinated objects (e.g., ``car'', ``book'', ``truck'')
    that the baseline model incorrectly predicts, demonstrating improved resilience to visual degradation.
}
\label{fig:qual_robustness}

\endgroup
\clearpage

\end{document}

%% file: math_commands.tex
\usepackage{amsmath,amsfonts,bm}

\def\eqref#1{equation~\ref{#1}}

\def\1{\bm{1}}

\DeclareMathAlphabet{\mathsfit}{\encodingdefault}{\sfdefault}{m}{sl}
\SetMathAlphabet{\mathsfit}{bold}{\encodingdefault}{\sfdefault}{bx}{n}

